%% file: main.tex
\documentclass[twoside,11pt]{article}

\usepackage[abbrvbib]{jmlr2e}

\usepackage[utf8]{inputenc}
\usepackage[T1]{fontenc}
\usepackage{lmodern}
\usepackage{microtype}
\usepackage{lastpage}

\usepackage{amsmath}
\usepackage{mathtools}
\usepackage{bm}

\PassOptionsToPackage{table}{xcolor}

\usepackage{placeins}
\usepackage{tikz}
\usetikzlibrary{arrows.meta, decorations.pathreplacing, shapes.multipart}
\usepackage{pgfplots}
\pgfplotsset{compat=1.18}

\usepackage{booktabs}
\usepackage{array}
\usepackage{multirow}
\usepackage{enumitem}
\usepackage[table]{xcolor}

\usepackage{algorithm}
\usepackage{algpseudocode}

\input{review.tex}

\hypersetup{colorlinks=true, linkcolor=blue, citecolor=blue, urlcolor=blue}
\makeatletter
\g@addto@macro\UrlBreaks{\do\/\do-\do_}
\makeatother
\usepackage[nameinlink,capitalize]{cleveref}
\crefname{figure}{Figure}{Figures}
\Crefname{figure}{Figure}{Figures}
\crefname{table}{Table}{Tables}
\Crefname{table}{Table}{Tables}

\renewenvironment{proof}[1][Proof]{\par\noindent{\bf #1\ }}{\hfill\BlackBox\\[2mm]}

\DeclareMathOperator{\Var}{Var}

\DeclareMathOperator{\diag}{diag}
\DeclareMathOperator{\softmax}{softmax}

\newcommand{\E}{\mathbb{E}}
\newcommand{\R}{\mathbb{R}}

\newcommand{\fl}{\mathrm{fl}}
\newcommand{\bfone}{\mathbf{1}}

\newcommand{\abs}[1]{\left\lvert #1\right\rvert}
\newcommand{\ulp}{\mathrm{ulp}}
\newcommand{\SR}{\mathrm{SR}}
\newcommand{\RN}{\mathrm{RN}}

\newcommand{\code}[1]{\texttt{#1}}
\newcommand{\SRbetter}[1]{\textcolor{green!45!black}{#1}}
\newcommand{\RNbetter}[1]{\textcolor{red!70!black}{#1}}
\newcommand{\safeincludegraphics}[2][]{%
  \IfFileExists{#2}{\includegraphics[#1]{#2}}{%
    \fbox{\parbox{0.9\linewidth}{\centering Missing figure: \path{#2}}}}}

\newcommand{\githubrepo}{\url{https://github.com/big-data-lab-team/fuzzy-llm}}

\jmlrheading{1}{2026}{1-\pageref{LastPage}}{09/26}{}{}{Yohan Chatelain and Pablo de Oliveira Castro}

\ShortHeadings{Stochastic Rounding in Transformer Inference}{Chatelain and de Oliveira Castro}
\firstpageno{1}

\begin{document}

\title{Stochastic Rounding in Low-Precision Transformer Inference:\\
  A Variable-Precision Emulation Study of a Small GPT-2}

\author{\name Yohan Chatelain\thanks{Equal contribution.} \email yohan.chatelain@camh.ca\thanks{Corresponding author.}\kern0.6em \\
  \addr Krembil Centre for Neuroinformatics\\
  Centre for Addiction and Mental Health\\
  250 College Street, Toronto, ON M5T 1R8, Canada
  \AND
  \name Pablo de Oliveira Castro\footnotemark[1] \email pablo.oliveira@uvsq.fr\kern0.6em \\
  \addr Universit\'e Paris-Saclay, UVSQ, LI-PaRAD\\
  Versailles, France}

\editor{}

\maketitle

\begin{abstract}%
  Should low-precision transformer inference use stochastic rounding~(SR) or
  round-to-nearest~(RN)? Holding the numerical format fixed and varying only the
  rounding rule at operation sites, we find that the answer is site-dependent.

  Two partial analyses account for this. A probabilistic forward- error bound for
  linear projections shows that SR's error envelope grows as
  $\mathcal{O}(\sqrt{n}\,u)$ in the reduction length $n$, against
  $\mathcal{O}(n\,u)$ for RN, a gap widest in the long multilayer perceptron (MLP)
  down-projection. A second-order decomposition of the expected cross- entropy
  change at the softmax separates signed drift, drift curvature, and a
  Fisher-weighted variance penalty: MLP noise is mostly a uniform logit shift, to
  which the softmax is invariant, whereas head noise is non-uniform.

  To test this,
  we extend PRISM, a vectorized CPU rounding library, with a variable-precision SR
  algorithm proven exact in hardware floating point. On DistilGPT-2 at \(t=6\)
  significand bits, SR in the MLP raises perplexity to $\times1.15$ the
  full-precision reference, against $\times2.21$ for RN. At the language-model
  head the ordering reverses, since only SR incurs the variance penalty. With the
  MLP output at $t=6$, assigning SR to the MLP and RN to the head brings
  perplexity within $\times1.10$ of the reference, $28\%$ below matched-bit RN.
\end{abstract}

\begin{keywords}
  stochastic rounding, low-precision arithmetic, transformer inference, floating-point emulation, large language models
\end{keywords}

\input{intro.tex}

\input{background.tex}

\input{algo.tex}
\input{prism.tex}

\input{analysis_1.tex}
\input{analysis_2.tex}

\input{experiments.tex}

\input{conclusion.tex}

\acks{This work was partially funded by the FPT-4 (ANR-24-CE46-7572) project.}

\appendix
\input{appendix.tex}

\input{appendix_drift.tex}

\bibliography{references}

\closereviewbar
\end{document}

%% file: review.tex
\newif\ifshowreviewmarks
\showreviewmarksfalse

\usepackage[xcolor,outerbars]{changebar}
\usepackage{amssymb}
\ifshowreviewmarks\else
  \nochangebars
\fi

\newif\ifreviewbaropen
\reviewbaropenfalse
\newif\ifreviewbarpaused
\reviewbarpausedfalse
\newcommand{\currentreviewbarcolor}{orange!70!black}
\newcommand{\closereviewbar}{%
  \ifreviewbaropen
    \cbend{}%
    \global\reviewbaropenfalse
  \fi}
\newcommand{\startreviewbar}[1]{%
  \closereviewbar
  \gdef\currentreviewbarcolor{#1}%
  \cbcolor{#1}\cbstart
  \global\reviewbaropentrue}
\newcommand{\pausereviewbar}{%
  \ifreviewbaropen
    \global\reviewbarpausedtrue
    \closereviewbar
  \else
    \global\reviewbarpausedfalse
  \fi}
\newcommand{\resumereviewbar}{%
  \ifreviewbarpaused
    \expandafter\startreviewbar\expandafter{\currentreviewbarcolor}%
    \global\reviewbarpausedfalse
  \fi}

\AddToHook{cmd/section/before}{\closereviewbar}
\AddToHook{cmd/subsection/before}{\closereviewbar}
\AddToHook{cmd/subsubsection/before}{\closereviewbar}
\AddToHook{env/algorithm/before}{\pausereviewbar}
\AddToHook{env/algorithm/after}{\resumereviewbar}
\AddToHook{env/table/before}{\pausereviewbar}
\AddToHook{env/table/after}{\resumereviewbar}
\AddToHook{env/table*/before}{\pausereviewbar}
\AddToHook{env/table*/after}{\resumereviewbar}
\AddToHook{env/figure/before}{\pausereviewbar}
\AddToHook{env/figure/after}{\resumereviewbar}
\AddToHook{env/figure*/before}{\pausereviewbar}
\AddToHook{env/figure*/after}{\resumereviewbar}

\newcommand{\reviewdetails}[1]{%
  \if\relax\detokenize{#1}\relax\else: #1\fi}

\makeatletter
\newcommand{\reviewer@one}[2]{%
  \ifshowreviewmarks\textcolor{blue}{[#1: #2]~}\fi}
\newcommand{\reviewer@resolved}[2]{%
  \ifshowreviewmarks
    \textcolor{green!45!black}{[\ensuremath{\checkmark}\ #1 resolved: #2]~}%
  \fi}
\newcommand{\reviewer@two}[3]{%
  \ifshowreviewmarks
    {\color{orange!80!black}#2} \textcolor{blue}{[#1: #3]~}%
  \else
    #2%
  \fi}
\newcommand{\newreviewer}[1]{%
  \expandafter\newcommand\csname #1\endcsname[1]{%
    \@ifnextchar\bgroup
    {\reviewer@two{#1}{##1}}%
    {\reviewer@one{#1}{##1}}%
  }%
  \expandafter\newcommand\csname #1resolved\endcsname[1]{%
    \reviewer@resolved{#1}{##1}}}
\makeatother

\newreviewer{YC}
\newreviewer{PO}

%% file: intro.tex
\section{Introduction}
\label{sec:intro}

Transformer inference increasingly uses formats narrower than the binary64
arithmetic for which much of classical numerical analysis was developed. The
dominant response is quantization: choose a target format, map weights and
activations onto it, and recover accuracy through calibration, outlier handling,
or per-channel scaling
\citep{3600270.3602468,xiao2023smoothquant,lin2024awq,gong2024what}. In that
program, the rounding operator is usually fixed by the underlying hardware format.
In post-training quantization, offline weight-rounding methods such as
AdaRound~\citep{nagel2020adaround} optimize the rounding direction of static
weights via a local second-order task loss surrogate. However, the rounding
rule used for intermediate calculations during inference has received less
attention. In contrast, dynamic rounding is already becoming explicit in
low-precision training: NVIDIA's microscaled NVFP4 recipe,
for example, selectively uses stochastic rounding for gradient casts to reduce
quantization bias~\citep{nvidia2025nvfp4}. Whether this same
selectivity is useful during inference remains an open question.

This paper isolates that question by holding the numerical format fixed and
varying only the rounding rule. Round-to-nearest~\citep{muller2018handbook} (RN)
returns the closer representable neighbor, whereas stochastic
rounding~\citep{forsythe1959rounding,croci2022stochastic} (SR), proposed by
Forsythe for controlling accumulated roundoff and pioneered in deep learning by
\citet{gupta2015deep} to prevent gradient stagnation in low-precision training,
selects either neighbor with probability proportional to proximity, making each
rounding error zero in expectation.
RN errors, by contrast, are deterministic functions of their operands and may
align; sufficiently small addends can be absorbed by a large accumulator or
repeated operations can drift away from the exact result. SR therefore trades
systematic local drift for higher variance.

Whether stochastic rounding improves inference accuracy depends first on the
operation. A long reduction is more exposed to accumulated RN drift than a short
one. In a GPT-2-style block~\citep{radford2019language}, the multilayer
perceptron (MLP) down-projection reduces over $d_{\rm ff}=3072$ terms, while an
attention-head score reduces over $d_{\rm head}=64$. At the same accumulator
precision, these operations can therefore have very different exposure to
accumulated roundoff.

The outcome also depends on the operation's location, both because local input
distributions affect how errors initially accumulate, and because downstream
layers transform the resulting perturbations. A perturbation in an attention score
passes through the softmax before changing the weighted value sum, whereas a
perturbation from the MLP down-projection enters the residual stream and is
transformed by later blocks. The same change of rounding rule can therefore
reduce perplexity at one site and increase it at another. This leads to the
research question of this paper: \emph{when and where does stochastic rounding
      improve transformer inference accuracy over round-to-nearest at low fixed
      precision?}

We study this question in the GPT-2 architecture, using DistilGPT-2 as a
six-block instance, and compare the two rounding rules site by site. Hardware
SR is tied to particular formats, operations, and accelerators, so experiments
at freely selected precisions require software emulation at every instrumented
operation, and the arithmetic intensity of transformer inference requires that
emulator to be fast and to vary precision and rounding rule at run time. The
contributions of this paper are as follows:

\begin{itemize}[leftmargin=1.5em]
      \item We present the \textbf{variable-precision stochastic rounding
                  (VPSR)} algorithm (\cref{sec:vpsr}), prove its correctness, and
            implement it in \textbf{PRISM}~\citep{prism}, where it runs up to
            $70\times$ faster than Verificarlo's Monte Carlo Arithmetic (MCA)
            random rounding (\cref{sec:prism-bench}).
      \item We analyze SR in transformer layers (\cref{sec:analysis}): we establish a
            vector Bienaym\'e--Chebyshev bound that extends prior scalar
            probabilistic forward-error analysis to the full matrix multiplication
            of the MLP projection. We also analyze softmax error propagation by
            deriving an exact decomposition of the cross-entropy loss change
            into signed target shift ($D$) and Kullback--Leibler (KL) distortion, and perform
            a second-order Taylor expansion of the KL term into drift curvature
            ($Q$) and variance penalty ($V$) terms, characterizing the
            site-level trade-offs between SR and RN.
      \item We evaluate on DistilGPT-2 and WikiText-2 (\cref{sec:experiments})
            by lowering $t$ site by site. SR is better in the MLP and RN at the
            language-model head, and the MLP gap widens as more down-projections
            are affected. At $t=6$ in the MLP, SR holds perplexity degradation to
            $15\%$ over the full-precision reference against $121\%$ for RN, and
            comes within $0.3\%$ of the reference by $t=8$.
\end{itemize}

\paragraph{Disclaimer on AI usage.}
AI tools (specifically Anthropic Claude, Google Gemini, and OpenAI Codex) were used during the preparation of this manuscript to assist with prose drafting, editing, and checking the mathematical proofs. All derivations, empirical evaluations, and final text were independently verified by the authors, who take full responsibility for the contents of this work.

%% file: background.tex
\section{Background and related work}
\label{sec:background}

\subsection{Stochastic rounding}

Let a binary floating-point format have $p$ significand bits, including the
implicit leading bit. We write $u = 2^{1-p}$
for its maximum relative ULP scale. Thus $u=2^{-23}$ for \code{binary32}
($p=24$) and $u=2^{-52}$ for \code{binary64} ($p=53$). This $u$ is the
spacing at $1$ and is twice the conventional RN unit roundoff. For a real $x$
that is not representable on $p$ bits, let $\lfloor x \rfloor$ be the
largest representable number below $x$ and $\lceil x \rceil$ the smallest above it, so
that $\lfloor x \rfloor < x < \lceil x \rceil$. Write $\ulp(x) = \lceil x \rceil - \lfloor x \rfloor$ for the
spacing of the floating-point grid in the binade containing $x$, which
satisfies~\citep{muller2018handbook}
\begin{equation}
  \frac{u}{2}\abs{x} < \ulp(x) \le u\abs{x}.
  \label{eq:ulp}
\end{equation}

Round-to-nearest returns whichever of $\lfloor x \rfloor$, $\lceil x \rceil$ is closer, resolving
ties by a fixed rule~\citep{ieee754-2019}. For non-representable $x$, stochastic rounding
returns:
\begin{equation*}
  \SR(x) =
  \begin{cases}
    \lceil x \rceil, & \text{with probability } p_x = \dfrac{x - \lfloor x \rfloor}{\lceil x \rceil - \lfloor x \rfloor}, \\[1.2ex]
    \lfloor x \rfloor, & \text{with probability } 1 - p_x,
  \end{cases}
\end{equation*}
so that $\E[\SR(x)] = x$. SR commits an error of magnitude at most
$\ulp(x)$; RN commits at most $\tfrac12\ulp(x)$. Consequently, for normal
results, RN has relative error at most $u/2$ and SR at most $u$.
RN therefore has the smaller worst-case error per operation, while
SR is unbiased on average.

Deterministic worst-case analysis of a length-$n$ dot product under RN yields
an error bound proportional to $nu$, and the bound is attained when the
individual errors align~\citep{higham2002accuracy}.
The mathematical properties of stochastic rounding are well established~\citep{croci2022stochastic}.
Under SR the errors form a martingale difference sequence, and probabilistic
backward error analysis replaces the $nu$ factor by one of order
$\sqrt{n}u$~\citep{higham2019new,connolly2021stochastic}. While this extends to
other multilinear kernels (e.g., polynomial evaluation and sum-product
trees~\citep{de2025error}), these bounds do not apply across nonlinear
operations. Under RN, an addend small enough relative to
the accumulator is absorbed entirely, so a sum of many small equal terms stalls
at a value set by $u$ rather than by the terms~\citep{higham2002accuracy}.
While stochastic rounding is sometimes criticized for the hardware overhead of
pseudorandom number generation, hardware implementations based on
linear-feedback shift registers make SR highly efficient in time,
energy, and area~\citep{benali2024stochastic}.

\subsection{Emulating stochastic rounding in software}
\label{sec:background-eft}

The standard construction uses \emph{error-free transforms} (EFTs). For an
elementary operation $x = a \circ b$, an EFT computes a pair $(\sigma,\tau)$ of
representable numbers with $\sigma = \fl(a \circ b)$, where $\fl(\cdot)$ denotes
floating-point rounding to nearest, and $\sigma + \tau = x$ exactly, recovering
the rounding residual.
\code{TwoSum}~\citep{dekker1971float} does this for addition in six operations,
and \code{TwoProdFMA}~\citep{graillat2006improving} does it for multiplication in
two when a fused multiply-add is available. Given $(\sigma,\tau)$, SR reduces to
comparing $\tau$ against a uniform threshold drawn on $[0, \ulp(\sigma))$.

Verrou~\citep{fevotte2016verrou} implemented this scheme, and Fasi and Mikaitis
subsequently formalized and analyzed it, giving pseudocode and correctness
arguments for all five elementary operations~\citep{fasi2021algorithms}. A second
mechanism is to use extended precision for the intermediate result and then to
round back to the working precision. This is implemented in Monte Carlo
Arithmetic~\citep{parker1997monte} with the \emph{random rounding} (MCA RR) mode,
available as a backend of Verificarlo~\citep{denis2016verificarlo}. Contrary to
the EFT route, MCA RR leaves the virtual precision $t$ free to be chosen by the
user, at the cost of evaluating each operation in higher precision.

In deep learning, simulation frameworks such as
QPyTorch~\citep{zhang2019qpytorch} support stochastic rounding across tensors,
but evaluate composite operations (such as matrix multiplications) using an
uninstrumented high-precision accumulator (\code{binary32}). Capturing error
accumulation or stagnation occurring along intermediate reduction chains
requires instruction-level emulation instead.

By contrast, compiler tools like Verificarlo~\citep{denis2016verificarlo}
transparently intercept elementary floating-point operations, allowing
PRISM~\citep{prism} to vectorize EFT rounding at the instruction level.
However, standard EFT methods round strictly at hardware precision ($t = p$);
\cref{sec:vpsr} resolves this limitation with the VPSR algorithm, decoupling
virtual precision from the hardware format without extended-precision overhead.

While hardware support exists, it is currently restricted to a few architectures
and formats/instructions. SR is not an IEEE-754 rounding attribute
\citep{ieee754-2019}, and commodity CPUs do not expose it. Graphcore IPUs couple
random generation to supported floating-point
arithmetic~\citep{graphcore2020aifloat}; AWS NeuronCore-v2 exposes an SR/RNE mode
for \code{FP32}-to-\code{BF16} autocasting~\citep{aws2023neuroncore}; and NVIDIA
PTX provides \code{cvt.rs} conversions with caller-supplied random bits on
supported Blackwell targets~\citep{nvidia2025ptx}. These mechanisms are tied to
particular formats, operations, and accelerators. Software emulation is
therefore a practical way to study SR on CPUs at a freely selected precision,
including precisions that have no hardware format.

\subsection{Low-precision transformer inference}

The quantization literature attacks a different part of the problem.
LLM.int8()~\citep{3600270.3602468} is an 8-bit integer quantization scheme for
transformer inference; it isolates the activation outliers that dominate
quantization error in large transformers into a higher-precision path.
SmoothQuant migrates activation difficulty into the
weights through a per-channel rescaling so that both become
quantizable~\citep{xiao2023smoothquant}. AWQ selects salient weight channels by
activation statistics and protects them~\citep{lin2024awq}. An empirical study by
\citet{gong2024what} frames all of these as perturbation control and asks which
perturbations matter.

All of these hold the rounding rule fixed and design the mapping of values onto
the grid. We do the opposite: the mapping is fixed and the rule is the
variable. The two address different error sources and can be applied together,
and the mechanism studied here, the interaction of rounding choice with
network propagation and downstream loss, is invisible to a formulation in which
rounding is a black box.

\paragraph{Stochastic rounding in the forward pass.}
The prevailing consensus confines SR to backward-pass gradient quantization and
rejects it for the forward pass. \citet{chmiel2023accurate} give
an explicit mathematical argument: because activation functions are nonlinear,
the unbiasedness SR provides at the tensor level
($\E[\SR(x)] = x$) does not survive composition with a nonlinear $f$, so that
$\E[f(\SR(x))] \neq f(x)$. They conclude that SR provides no benefit in the forward
pass and that RN, which minimises mean-squared error, should be preferred.
NVIDIA's NVFP4 pretraining recipe~\citep{nvidia2025nvfp4} follows the same
split, mandating SR only for gradient casts and using RN in the forward path.
In the forward pass, standard quantization recipes typically address
representation error by managing activation outliers, for example, using random
Hadamard transforms~\citep{ashkboos2024quarot,nvidia2025nvfp4}.  However,
outlier mitigation leaves the arithmetic reduction operator unaddressed: in
long inner products, deterministic RN errors still accumulate systematically
into directional drift.

Our analysis in \cref{sec:analysis} shows that this categorical rejection of SR
in the forward pass is too coarse: the performance of SR versus RN differs
across parts of the network.
In MLP projections, the long linear reduction allows deterministic RN errors
to accumulate into non-uniform drift, whereas SR limits this accumulation and
propagates predominantly as a uniform shift that the output softmax ignores
(\cref{sec:analysis-softmax}).
Conversely, at the output of the network (the language-model head), SR introduces
non-uniform fluctuations directly across classes, which cross-entropy loss
penalizes, shifting the advantage to RN.
A site-selective application of SR---using it at MLP projections while
retaining RN at pre-softmax sites---exploits these site-dependent dynamics
to control reduction drift directly at the arithmetic level.

\subsection{The architecture under study}

We use the sequential pre-LayerNorm GPT-2 block
\citep{radford2019language,vaswani2017attention}. Each layer first applies its
attention residual and then feeds the resulting stream to a separately
normalized MLP residual:
\begin{equation*}
  a_\ell = x_\ell + \mathrm{Attn}_\ell(\mathrm{LN}_{1,\ell}(x_\ell)),
  \qquad
  x_{\ell+1} = a_\ell + \mathrm{MLP}_\ell(\mathrm{LN}_{2,\ell}(a_\ell)),
\end{equation*}
where $x_\ell,a_\ell \in \R^{T \times d_{\rm model}}$ are residual streams at
layer $\ell$ for a sequence of $T$ tokens, $\mathrm{LN}_{1,\ell}$ and
$\mathrm{LN}_{2,\ell}$ are separate layer normalizations,
$\mathrm{Attn}_\ell$ is multi-head self-attention, and $\mathrm{MLP}_\ell$ is
the position-wise feedforward network. The normalization sits inside the
residual branch rather than after it~\citep{xiong2020layer}, so the stream
$x_\ell$ is never renormalized and a perturbation introduced at layer $\ell$
propagates additively to the output.

The MLP is a pair of affine maps around a GELU
nonlinearity $\phi$~\citep{hendrycks2016gelu},
\begin{equation}
  z = \widetilde a W_1 + b_1, \qquad h = \phi(z), \qquad y = h W_2 + b_2,
  \label{eq:mlp}
\end{equation}
with $W_1 \in \R^{d_{\rm model} \times d_{\rm ff}}$ and
$W_2 \in \R^{d_{\rm ff} \times d_{\rm model}}$. Attention first forms queries,
keys and values by a single projection of the normalized stream,
$[Q\,|\,K\,|\,V] = \tilde{x} W_{\rm qkv} + b_{\rm qkv}$ (the \code{attn\_c\_attn}
matrix multiplication), splits the result across $H$ heads, and for each head
computes
\begin{equation*}
  S = \frac{QK^\top}{\sqrt{d_{\rm head}}}, \qquad
  A = \softmax(S), \qquad
  O = AV,
\end{equation*}
where the softmax is taken row-wise under a causal mask. The per-head outputs
are concatenated and projected back to the stream by $W_{\rm o}$ (the
\code{attn\_c\_proj} matrix multiplication). After the last block, a final linear map
$W_{\rm head} \in \R^{d_{\rm model} \times V}$ produces logits over a
vocabulary of $V$ tokens, which are scored by cross-entropy against the next
token.

The dimensions matter for what follows, because they set reduction lengths. The lower panel of \cref{fig:architecture} places the reductions of one block on the dataflow. Three of the four matrix
multiplications reduce over $d_{\rm model}$ (\code{attn\_c\_attn},
\code{attn\_c\_proj}, \code{mlp\_c\_fc}) and the fourth over
$d_{\rm ff}$ (\code{mlp\_c\_proj}), while inside attention the score
product reduces over $d_{\rm head}$ and the value product over the
sequence length. The specific dimensions of the evaluated model are detailed in \cref{sec:experiments}.

\Cref{fig:architecture} places the block-level dataflow inside the complete DistilGPT-2 evaluation path, from token and
position embeddings through the six-block stack to the final normalization,
language-model head, cross-entropy, and perplexity.

\input{gpt2_full}

%% file: gpt2_full.tex
\usetikzlibrary{arrows.meta,calc,positioning}
\definecolor{gptpanel}{RGB}{237,243,248}
\definecolor{gptattention}{RGB}{252,229,207}
\definecolor{gptmlp}{RGB}{226,239,204}
\definecolor{gptreduction}{RGB}{255,244,204}
\definecolor{gptresidual}{RGB}{76,101,125}

\begin{figure}[t]
  \centering
  \resizebox{0.99\linewidth}{!}{%
    \begin{tikzpicture}[
      x=1cm,
      y=1cm,
      font=\sffamily\footnotesize,
      >={Latex[length=2.1mm,width=1.5mm]},
      flow/.style={->,line width=0.75pt},
      residual/.style={->,line width=0.7pt,draw=gptresidual},
      tie/.style={<->,densely dashed,line width=0.65pt,draw=gptresidual},
      operation/.style={
          draw,
          rounded corners=2pt,
          fill=white,
          minimum height=7mm,
          inner xsep=3pt,
          inner ysep=1.5pt,
          align=center,
          font=\sffamily\scriptsize
        },
      reduction/.style={operation,fill=gptreduction},
      block/.style={operation,fill=gptpanel,minimum width=9.5mm,minimum height=11mm},
      metric/.style={operation,fill=blue!7,minimum height=7.5mm},
      add/.style={
          circle,
          draw,
          fill=white,
          minimum size=4.8mm,
          inner sep=0pt,
          font=\normalsize
        }
      ]

      \node[anchor=north west,font=\bfseries\small]
      at (0.05, 8.05) {Complete DistilGPT-2 evaluation path};

      \node[operation] (tokens) at (0.55, 7.00) {token ids\\$w_{1:T}$};
      \node[operation] (positions) at (0.55, 5.70) {positions\\$1{:}T$};
      \node[operation] (wte) at (2.05, 7.00) {token\\emb.\ $W_E$};
      \node[operation] (wpe) at (2.05, 5.70) {position\\emb.\ $W_P$};
      \node[add] (embadd) at (3.25, 6.35) {$+$};

      \node[block] (bzero) at (4.50, 6.35) {block\\$0$};
      \node[block] (bone) at (5.85, 6.35) {block\\$1$};
      \node[draw=none,font=\bfseries] (dots) at (7.05, 6.35) {$\cdots$};
      \node[block] (bfive) at (8.25, 6.35) {block\\$5$};

      \node[operation] (lnf) at (9.60, 6.35) {final\\$\mathrm{LN}_f$};
      \node[reduction] (head) at (11.25, 6.35) {\code{lm\_head}\\$768\!\to\!50\,257$};
      \node[operation] (logits) at (12.85, 6.35) {logits\\$\lambda_{1:T}$};

      \node[operation,minimum height=5.5mm] (targets) at (10.15, 5.15) {targets\\$y_{2:T+1}$};
      \node[metric] (loss) at (12.85, 5.15) {full-precision\\cross-entropy};
      \node[metric] (ppl) at (14.85, 5.15) {perplexity\\$\exp(\overline{\mathcal L})$};

      \draw[flow] (tokens) -- (wte);
      \draw[flow] (positions) -- (wpe);
      \draw[flow] (wte.east) -| (embadd.north);
      \draw[flow] (wpe.east) -| (embadd.south);

      \draw[flow] (embadd) -- node[above=1pt,font=\scriptsize] {$x_0$} (bzero);
      \draw[flow] (bzero) -- (bone);
      \draw[flow] (bone) -- (dots);
      \draw[flow] (dots) -- (bfive);
      \draw[flow] (bfive) -- (lnf);
      \draw[flow] (lnf) -- (head);
      \draw[flow] (head) -- (logits);

      \draw[flow] (logits) -- (loss);
      \draw[flow] (targets) -- (loss);
      \draw[flow] (loss) -- (ppl);

      \fill[gptpanel,rounded corners=4pt,draw=gptresidual!40,line width=0.5pt]
      (-0.20, -1.35) rectangle (15.65, 4.10);

      \node[anchor=north west,font=\bfseries\small]
      at (0.05, 3.93) {Sequential pre-LayerNorm block $\ell$ (repeated 6 times)};

      \fill[gptattention,rounded corners=4pt]
      (2.05, 1.65) rectangle (12.40, 3.45);
      \node[font=\bfseries\scriptsize,text=black!85]
      at (6.55, 3.25) {causal multi-head self-attention};

      \coordinate (xin) at (0.55, 2.60);
      \node[anchor=east] at (0.48, 2.60) {$x_\ell$};
      \node[operation] (lnone) at (1.35, 2.60) {$\mathrm{LN}_1$};
      \node[reduction] (cattn) at (3.35, 2.60)
      {\code{attn\_c\_attn}\\[-0.45mm]$d_{\rm model}=768$};
      \node[reduction] (qk) at (5.65, 2.60)
      {$QK^\top$\\[-0.45mm]$d_{\rm head}=64$};
      \node[operation] (softmax) at (7.55, 2.60)
      {causal\\softmax};
      \node[reduction] (av) at (9.20, 2.60)
      {$AV$\\[-0.45mm]$n=T$};
      \node[reduction] (attnproj) at (11.15, 2.60)
      {\code{attn\_c\_proj}\\[-0.45mm]$d_{\rm model}=768$};
      \node[add] (addone) at (13.00, 2.60) {$+$};
      \coordinate (amid) at (14.20, 2.60);
      \node[anchor=south] at (13.55, 2.67) {$a_\ell$};

      \draw[flow] (xin) -- (lnone.west);
      \draw[flow] (lnone.east) -- (cattn.west);
      \draw[flow] (cattn.east) -- (qk.west);
      \draw[flow] (qk.east) -- (softmax.west);
      \draw[flow] (softmax.east) -- (av.west);
      \draw[flow,rounded corners=2pt]
      (cattn.south) -- (3.35, 1.95) -- (9.20, 1.95) -- (av.south);
      \node[font=\scriptsize,fill=gptattention,inner sep=1pt]
      at (6.25, 1.95) {$V$};
      \draw[flow] (av.east) -- (attnproj.west);
      \draw[flow] (attnproj.east) -- (addone.west);
      \draw[flow] (addone.east) -- (amid);

      \draw[residual,rounded corners=2pt]
      (xin) -- (0.55, 1.40) -- (13.00, 1.40) -- (addone.south);
      \node[anchor=south,draw=none,font=\scriptsize,text=gptresidual]
      at (8.00, 1.42) {skip connection};

      \fill[gptmlp,rounded corners=4pt]
      (2.05, -0.70) rectangle (10.60, 1.00);
      \node[font=\bfseries\scriptsize,text=black!85]
      at (5.85, 0.80) {feed-forward network (MLP)};

      \coordinate (ain) at (0.55, 0.10);
      \node[anchor=south] at (0.68, 0.18) {$a_\ell$};
      \node[operation] (lntwo) at (1.40, 0.10) {$\mathrm{LN}_2$};
      \node[reduction] (fc) at (3.65, 0.10)
      {\code{mlp\_c\_fc}\\[-0.45mm]$d_{\rm model}=768$};
      \node[operation,fill=white] (gelu) at (6.25, 0.10) {GELU};
      \node[reduction] (mlpproj) at (9.10, 0.10)
      {\code{mlp\_c\_proj}\\[-0.45mm]$d_{\rm ff}=3072$};
      \node[add] (addtwo) at (11.75, 0.10) {$+$};
      \coordinate (xout) at (13.20, 0.10);
      \node[anchor=south] at (12.35, 0.17) {$x_{\ell+1}$};

      \draw[flow] (ain) -- (lntwo.west);
      \draw[flow] (lntwo.east) -- (fc.west);
      \draw[flow] (fc.east) -- (gelu.west);
      \draw[flow] (gelu.east) -- (mlpproj.west);
      \draw[flow] (mlpproj.east) -- (addtwo.west);
      \draw[flow] (addtwo.east) -- (xout);

      \draw[residual,rounded corners=2pt]
      (ain) -- (0.55, -0.95) -- (11.75, -0.95) -- (addtwo.south);
      \node[anchor=south,draw=none,font=\scriptsize,text=gptresidual]
      at (6.60, -0.93) {skip connection};

      \draw[flow,rounded corners=3pt]
      (amid) -- (15.10, 2.60) -- (15.10, 1.20) --
      (0.20, 1.20) -- (0.20, 0.10) -- (ain);

        \end{tikzpicture}%
  }
  \caption{Complete DistilGPT-2 architecture and evaluation path. Token and
    position embeddings form $x_0$, which passes sequentially through six
    pre-LayerNorm GPT-2 blocks. The lower panel expands the
    repeated block into its attention and MLP sublayers. Yellow nodes identify the groups
    instrumented by the rounding experiments.}
  \label{fig:architecture}
\end{figure}

%% file: algo.tex
\begin{figure}[t]
    \centering
    \begin{tikzpicture}[scale=1.1, every node/.style={scale=0.95}]
        \draw[thick, -{Stealth}] (-0.5,0) -- (6.5,0);

        \foreach \x in {1.0, 2.0, 3.0, 4.0, 5.0} {
            \draw[gray!70, thick] (\x, -0.08) -- (\x, 0.08);
        }
        \draw[decorate, decoration={brace, amplitude=3pt}, gray!80]
            (1.0, 0.15) -- (2.0, 0.15) node[midway, above=2pt, font=\tiny] {$\ulp_{\rm hw}$};

        \filldraw (0,0) circle (2pt) node[below=8pt] {$\tilde{\sigma}$};
        \filldraw (6,0) circle (2pt) node[below=8pt] {$\tilde{\sigma} + \ulp_t$};

        \coordinate (X) at (5.3, 0);
        \filldraw[red] (X) circle (2pt) node[above=2pt, red] {\textbf{Exact $x$}};

        \draw[blue, thick] (0, 0.6) -- (5.0, 0.6) node[midway, above] {$\rho$};

        \draw[orange, thick] (5.0, 0.6) -- (5.3, 0.6) node[midway, above=1pt] {\footnotesize $\tau$};

        \fill[red, opacity=0.1] (0,-0.2) rectangle (5.3, 0.9);

        \node (pi) at (3.5, -1) [draw, circle, inner sep=2pt, fill=gray!10] {$\pi$};
        \draw[-{Latex}, gray, dashed, thick] (pi) -- (3.5, -0.1);

        \draw[decorate, decoration={brace, mirror, amplitude=6pt}] (0,-1.3) -- (6,-1.3)
            node[midway, below=8pt] {Sample threshold $\pi$ on $[0,\ulp_t)$};
    \end{tikzpicture}
    \caption{Geometric intuition for variable-precision stochastic rounding (VPSR). An exact value $x$ lies between virtual grid points at precision $t$. The hardware rounding $\sigma$ is truncated to $\tilde{\sigma}$, leaving a virtual truncation error $\rho$ and hardware residual $\tau$. A threshold $\pi$ is sampled on $[0,\ulp_t)$ to determine the virtual rounding direction.}
    \label{fig:vpsr_eft}
\end{figure}

\begin{algorithm}[t]
\caption{Variable-precision stochastic rounding}
\label{alg:vpsr}
\begin{algorithmic}[1]
\Require Exact inputs $\sigma, \tau$, virtual precision $t$
\Ensure Rounded value $r$ at virtual precision $t$
\State $\tilde{\sigma} \leftarrow \mathrm{trunc}_t(\sigma)$ \Comment{Truncated value at precision $t$}
\State $\rho \leftarrow \sigma - \tilde{\sigma}$ \Comment{Exact virtual truncation error}
\State \textbf{if} $\rho = 0$ \textbf{ and } $\tau = 0$ \textbf{then} \Return $\tilde{\sigma}$ \Comment{Flattened to a mask in SIMD}
\State $s_{\rm dir} \leftarrow \begin{cases} -1 & \text{if } (\rho = 0 \text{ and } \tau < 0) \text{ or } \rho < 0 \\ 1 & \text{otherwise} \end{cases}$ \Comment{Determine rounding direction sign}
\Statex \Comment{The choice of $\eta$ ensures $\ulp_t$ will be the ULP of the exact value $x$ at precision $t$, i.e., $\ulp_t(x)$}
\State $\eta \leftarrow \begin{cases} \mathrm{exponent}(\mathrm{predecessor\_abs}(\tilde{\sigma})) & \text{if } (s_{\rm dir} < 0) \neq (\tilde{\sigma} < 0) \\ \mathrm{exponent}(\tilde{\sigma}) & \text{otherwise} \end{cases}$
\State $\ulp_t \leftarrow s_{\rm dir} \cdot 2^{\eta - (t - 1)}$ \Comment{Compute signed ULP at virtual precision $t$ using exponent $\eta$}
\State $S \leftarrow \begin{cases} 2^{64} & \text{if } \eta - (t - 1) \le e_{\min} \\ 1 & \text{otherwise} \end{cases}$ \Comment{Scale subnormals to normal range}
\State $\rho_S \leftarrow \rho \cdot S, \quad \tau_S \leftarrow \tau \cdot S, \quad \ulp_{t,S} \leftarrow \ulp_t \cdot S$
\State Sample $z$ uniformly from $\{k2^{-r}:0\le k<2^r\}$
\State $\pi \leftarrow \ulp_{t,S} \cdot z$ \Comment{Random threshold}
\State $D \leftarrow (\rho_S - \pi) + \tau_S$ \Comment{Decision boundary term}
\State $\Delta \leftarrow \begin{cases} \ulp_t & \text{if } D \cdot s_{\rm dir} \ge 0 \\ 0 & \text{otherwise} \end{cases}$
\State \Return $\tilde{\sigma} + \Delta$
\end{algorithmic}
\end{algorithm}

\section{Variable-precision stochastic rounding (VPSR)}
\label{sec:vpsr}

This section presents the variable-precision stochastic rounding (VPSR) algorithm, which decouples the rounding decision from the hardware format by decomposing each exact arithmetic result onto a coarser virtual grid. We prove that the rounding decision is evaluated exactly in hardware floating point, with no extended-precision overhead.

At hardware precision, the EFT of \cref{sec:background-eft} recovers the exact result $x = \sigma + \tau$ with $\sigma = \fl(x)$, and the stochastic rounding algorithm formalized by \citet{fasi2021algorithms} compares the hardware residual $\tau$ against a uniform threshold on $[0, \ulp_{\rm hw}(\sigma))$.

Variable-precision stochastic rounding (VPSR) extends this algorithmic framework to a coarser virtual grid at a precision $t \le p$ instead, where $p$ is the hardware precision. Truncating the
head to that grid, $\tilde{\sigma} = \mathrm{trunc}_t(\sigma)$, splits the
distance from $x$ down to the virtual grid point into a virtual truncation error
$\rho = \sigma - \tilde{\sigma}$ and the hardware residual $\tau$
(\cref{fig:vpsr_eft}),
\begin{equation*}
  x = \tilde{\sigma} + \rho + \tau .
\end{equation*}

To perform ideal stochastic rounding, \cref{alg:vpsr} draws a continuous threshold
$\pi \sim \mathcal{U}[0, \ulp_t)$ and rounds away from $\tilde\sigma$ when the
signed threshold does not exceed the signed distance $\rho+\tau$. Fixing
$\pi = \tfrac12\ulp_t$ instead gives untied round-to-nearest at
precision $t$.

Because hardware generators provide discrete bits rather than continuous draws, PRISM approximates the uniform distribution using $r$ random bits ($r=24$ for \code{binary32} and $r=53$ for \code{binary64}). Let $\SR_{t,r}$ denote this finite-bit implementation. This discretization introduces a negligible bias bounded by the number of random bits $r$:
\begin{equation*}
  \abs{\E[\SR_{t,r}(x)]-x}\le 2^{-r}\abs{\ulp_t(x)}.
\end{equation*}

Three steps that are trivial at hardware precision require care at virtual precision:
\begin{enumerate}[leftmargin=1.5em]
  \item \textbf{Two-term decision.} The relevant distance from $x$ to the virtual grid point is $\rho + \tau$, not $\tau$ alone. These terms can differ by the full hardware precision, making the sign of their floating-point sum fragile. \Cref{sec:proof} proves that evaluating $(\rho - \pi) + \tau$ in hardware arithmetic nevertheless preserves the exact sign, requiring no compensated arithmetic.
  \item \textbf{Power-of-two boundaries.} When the rounding direction carries $x$ across a binade boundary, line~5 of \cref{alg:vpsr} selects the predecessor exponent so that $\ulp_t$ reflects the grid spacing at $x$ rather than at $\tilde{\sigma}$.
  \item \textbf{Subnormal virtual grid.} When $\ulp_t$ is subnormal, the threshold product $\ulp_t \cdot z$ loses precision. Lines~7--8 scale all quantities by $S = 2^{64}$ (exact as a power of two) and compare in the normal range; the decision is invariant under common positive scaling.
\end{enumerate}

\subsection{Exactness of the rounding decision}
\label{sec:proof}

We assume IEEE~754 floating-point arithmetic with round-to-nearest. Let
$\fl(x+y)$ and $\fl(x-y)$ denote floating-point addition and subtraction,
respectively.

We define the rounding direction sign $s_{\rm dir}$ (equivalently to line~4 of \cref{alg:vpsr}, noting that the $\rho = 0, \tau = 0$ case is handled on line~3):
\[
s_{\rm dir} = 
\begin{cases} 
1 & \text{if } \rho > 0 \text{ or } (\rho = 0 \text{ and } \tau > 0) \\
-1 & \text{if } \rho < 0 \text{ or } (\rho = 0 \text{ and } \tau < 0)
\end{cases}
\]

Throughout the theorem and its proof, $\ulp_{\rm hw}$ abbreviates
$\ulp_{\rm hw}(x)$, the hardware grid spacing at $x$. This is an absolute
spacing, not the relative ULP scale $u = 2^{1-t}$ of \cref{eq:ulp}.

\begin{theorem}[Exact sign evaluation of the rounding decision]
\label{thm:exact-sign}
Let $\rho$, $\pi$, and $\tau$ be floating-point numbers. If
the following properties hold:
\begin{enumerate}
    \item $|\tau| \le \ulp_{\rm hw}/2$
    \item $\rho = 0$ or $|\rho| \ge \ulp_{\rm hw}$
    \item $\rho$ and $\pi$ share the same sign
\end{enumerate}
Then the floating-point evaluation of the decision boundary preserves the sign:
\[
(\rho - \pi + \tau) \cdot s_{\rm dir} \ge 0 \iff \fl(\fl(\rho - \pi) + \tau) \cdot s_{\rm dir} \ge 0
\]
\end{theorem}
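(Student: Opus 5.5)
We first reduce to $s_{\rm dir}=1$. Negating $\rho,\pi,\tau$ together flips $s_{\rm dir}$ and, since round-to-nearest is symmetric, commutes with $\fl$. So we may assume $s_{\rm dir}=1$: either $\rho\ge h$, or $\rho=0$ and $\tau>0$. Here $h:=\ulp_{\rm hw}$, which is a power of two, so $h/2$ is representable (outside the extreme subnormal range). Hypothesis~3 then gives $\pi\ge 0$; in the case $\rho=0$ we read it as $\pi$ carrying the sign of $s_{\rm dir}$.

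Throughout we use one elementary fact about IEEE addition with gradual underflow. For floating-point $a,b$, $\fl(a+b)$ has the same sign as $a+b$, and it is zero iff $a+b=0$. Hence the outer addition never changes the sign of its exact input $w+\tau$, where $w=\fl(\rho-\pi)$. The task is therefore only to show that replacing $v:=\rho-\pi$ by $w$ does not move $v+\tau$ across zero. Recall that we must preserve the predicate "$\ge 0$", so a zero produced on the wrong side also counts as a failure. We also use monotonicity of $\fl$.

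The case $\rho=0$ is immediate: $v=-\pi$ is exact, so $w=v$.

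For $\rho\ge h$ we split on the size of $v$, with Sterbenz's lemma as the main tool.
\begin{itemize}[leftmargin=1.5em]
\item \emph{Sterbenz regime} $-\rho\le v\le \rho/2$, equivalently $\rho/2\le\pi\le 2\rho$. Here $\rho-\pi$ is computed exactly, so $w=v$ and the two sides of the equivalence coincide.
\item \emph{Large positive} $v>\rho/2\ge h/2$. Then $v+\tau>0$. By monotonicity, $w\ge \fl(h/2)=h/2$, so $w+\tau\ge h/2-|\tau|\ge 0$ by hypothesis~1. Both predicates are therefore true; the borderline value $0$ is harmless because the predicate is "$\ge 0$".
\item \emph{Large negative} $v<-\rho\le -h$. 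Then $v+\tau\le -h+h/2<0$. By monotonicity, $w\le \fl(-h)=-h$, so $w+\tau\le -h/2<0$. Both predicates are false.
\end{itemize}
These three cases cover all $v$, which proves the equivalence.

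The main obstacle is the tie-like boundary in the negative direction. There, a naive bound would give $w=-h/2$ and $\tau=h/2$, producing a spurious $0$ that reads as "round up". The plan resolves this by choosing the case split at $v=-\rho$ rather than at $v=-h/2$. Every $v\in[-\rho,-h/2]$ then falls inside the Sterbenz window and is exact, so the only inexact negative values satisfy $v<-h$. This margin is exactly what hypothesis~2 ($|\rho|\ge h$) supplies, and I expect it to be the step requiring the most care.

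A secondary check is that $h$ is the hardware spacing at $x$ while $\rho,\tau$ live on the grid of $\sigma$. In particular, $h/2$ and $-h$ must be exactly representable so that the monotonicity comparisons are valid. This holds whenever $h$ is a normal or subnormal power of two, which the subnormal scaling in \cref{alg:vpsr} ensures.
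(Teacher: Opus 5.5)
Your argument is correct, but it is organized differently from the paper's. The paper proves the forward implication for all inputs at once by monotonicity ($\rho'-\pi'\ge-\tau'$ implies $\fl(\rho'-\pi')\ge-\tau'$). It proves the backward implication by contradiction: the only possible failure is a spurious zero, which forces $\fl(\rho'-\pi')=-\tau'$. Bounding the subtraction's rounding error by $\tfrac12\ulp_{\rm hw}(-\tau')\le\ulp_{\rm hw}/2$ then yields $\rho'/2<\pi'\le2\rho'$, so Sterbenz's lemma makes the subtraction exact and the collapse impossible.

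You instead split $v=\rho-\pi$ in advance into the Sterbenz window and its two complements, and handle the complements by monotonicity alone. Hypothesis~2 enters as the observation that every inexact negative $v$ lies below the representable point $-\ulp_{\rm hw}$. Hence $w\le-\ulp_{\rm hw}$, which leaves a margin of $\ulp_{\rm hw}/2$ against $\tau$.

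Your version needs no rounding-error estimate for the subtraction and makes the role of hypothesis~2 explicit. The paper's version isolates the single failure mode directly and compares only against values that are already floating-point numbers.

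On that last point, you do not need $\ulp_{\rm hw}/2$ to be representable, nor the algorithm's subnormal scaling to guarantee it. In the large-positive case, $v>\ulp_{\rm hw}/2\ge-\tau$ and monotonicity give $w\ge\fl(-\tau)=-\tau$ directly, which is exactly the paper's forward step. The theorem therefore holds for arbitrary floating-point inputs satisfying the hypotheses, with no reference to the algorithm.
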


The three hypotheses hold in \cref{alg:vpsr}: (1)~is the standard RN bound $|\tau| \le \ulp_{\rm hw}/2$; (2)~holds because truncation zeros trailing bits without changing the exponent, making $\rho$ an integer multiple of $\ulp_{\rm hw}(\sigma)$; since IEEE~754 monotonicity gives $|x| \ge 2^k \Rightarrow |\fl(x)| \ge 2^k$, the rounded value $\sigma$ cannot cross a binade boundary toward zero, so $\ulp_{\rm hw}(\sigma) \ge \ulp_{\rm hw}$; (3)~follows from $|\rho| \ge \ulp_{\rm hw} \ge 2|\tau|$, which prevents $\tau$ from flipping the sign of $\rho$, combined with the algorithm assigning $\pi$ the sign of $s_{\rm dir}$. The subnormal scaling on lines~7--8 preserves all three by multiplying distances and spacing by a common exact power of two.

\begin{proof}[Proof sketch]
We normalize the variables by the rounding direction sign $s_{\rm dir} \in \{-1, 1\}$, setting $\rho' = \rho \cdot s_{\rm dir}$, $\pi' = \pi \cdot s_{\rm dir}$, and $\tau' = \tau \cdot s_{\rm dir}$, so that the equivalence becomes $(\rho' - \pi') + \tau' \ge 0 \iff \fl(\fl(\rho' - \pi') + \tau') \ge 0$.

When $\rho = 0$, the first subtraction $\fl(0 - \pi') = -\pi'$ is exact in IEEE~754 arithmetic. The decision then reduces to a single floating-point addition $\fl(-\pi' + \tau')$, whose sign matches the exact sign by monotonicity.

When $\rho \ne 0$, monotonicity directly ensures the forward implication. For the backward implication, the only scenario where the signs could disagree is an artificial collapse to zero, $\fl(\fl(\rho' - \pi') + \tau') = 0$, while the exact sum is strictly negative, $(\rho' - \pi') + \tau' < 0$. Under the bounds from Hypotheses~1 and~2, such a collapse requires the intermediate difference $\fl(\rho' - \pi')$ to equal $-\tau'$, forcing $\rho'/2 \le \pi' \le 2\rho'$. This condition satisfies the hypotheses of Sterbenz's Lemma~\citep{sterbenz1974floating}, which guarantees that the floating-point subtraction $\fl(\rho' - \pi')$ is exact. Exact subtraction then yields $(\rho' - \pi') + \tau' = 0$, contradicting the assumption that the exact sum is strictly negative. Thus, no sign discrepancy can occur. The complete formal proof is deferred to \cref{app:proof-exact-sign}.
\end{proof}

%% file: prism.tex
\section{PRISM: vectorized probabilistic rounding at variable precision}
\label{sec:prism}

This section describes how \cref{alg:vpsr} is applied to every floating-point
operation of a PyTorch model and what it costs. We implement VPSR in PRISM
(\emph{Probabilistic Rounding with Instruction Set Management}), an open-source
C++17 library that previously applied vectorized SR only at hardware
precision~\citep{gonzalezpepe2026fuzzy}. The Verificarlo compiler
pass~\citep{denis2016verificarlo} replaces each floating-point instruction of
the compiled framework with a call to a PRISM kernel, which computes the exact
result as an EFT pair and rounds it at a virtual precision $t$ taken from a
run-time configuration (\cref{fig:prism-pipeline}). In compiler-instrumented
loops, PRISM SR costs $8.5$--$32.1\times$ native time, $4.4$--$70.5\times$ less
than Verificarlo's MCA random rounding (\cref{sec:prism-bench}).

\begin{figure}[htb]
  \centering
  \begin{tikzpicture}[
    font=\small,
    hbox/.style={draw, rounded corners=2pt, align=center, inner sep=3pt,
        rectangle split, rectangle split parts=2,
        rectangle split part fill={gray!25, gray!6}},
    arr/.style={-{Stealth}, thick}]
    \node[hbox, text width=2.2cm] (op) at (0,0)
    {\textbf{PyTorch}
      \nodepart{two} \code{c = a * b}};
    \node[hbox, text width=2.9cm] (entry) at (4.16,0)
    {\textbf{PRISM call}
      \nodepart{two} \code{mulf32x16(a,b)}};
    \node[hbox, text width=2.8cm] (kernel) at (8.43,0)
    {\textbf{VPSR kernel}
      \nodepart{two} EFT $(\sigma,\tau)$,\\ then \cref{alg:vpsr}};
    \node[hbox, text width=2.2cm] (out) at (11.95,0)
    {\textbf{Result}
      \nodepart{two} $c$ rounded at\\ precision $t$};
    \node[hbox] (cfg) at (8.43,1.7)
    {\textbf{Configuration}
      \nodepart{two} virtual precision $t$};
    \draw[arr] (op) -- node[above, font=\scriptsize, align=center]
    {Verificarlo\\ pass} (entry);
    \draw[arr] (entry) -- node[below, font=\scriptsize] {16 lanes} (kernel);
    \draw[arr] (kernel) -- (out);
    \draw[arr] (cfg) -- (kernel);
  \end{tikzpicture}
  \caption{Execution of one instrumented operation. At compile time, the
    Verificarlo pass replaces a 512-bit \code{binary32} multiplication with a
    call to the PRISM entry point for $16$ lanes. At run time, the kernel
    rounds all lanes with \cref{alg:vpsr} at the precision set.}
  \label{fig:prism-pipeline}
\end{figure}

\subsection{Instrumentation and kernels}
\label{sec:prism-kernels}

The Verificarlo pass maps each scalar or vector floating-point instruction to a
PRISM entry point for the same operation, format and lane count, so PyTorch's
source is unchanged. Each call rounds all its lanes in one kernel invocation.
We use PRISM's static build, which fixes the instruction set at compile time.

Each kernel computes the EFT pair $(\sigma,\tau)$ of \cref{sec:background-eft}
and then applies \cref{alg:vpsr}. SR and RN share the kernel: SR draws $z$ from
$r$ random bits, and RN fixes $z=1/2$. Because the final comparison of
\cref{alg:vpsr} is non-strict, an exact midpoint rounds away from
$\tilde\sigma$, and hence away from zero; giving \emph{untied} RN.
FMA is rounded as one operation, using the \code{ErrFmaNearest}
transform~\citep{boldo2011exact} to represent $ab+c$ exactly.

The kernels are branch-free. Every data-dependent condition, in
\cref{alg:vpsr} and in the EFTs, is computed as a lane mask and resolved by
selection, so all lanes follow one instruction stream whatever their values,
including infinities and NaNs. The kernels are written once against Google
Highway~\citep{highway}, a C++ SIMD abstraction, and compile to its x86, Arm,
RISC-V and WebAssembly targets; where no hardware FMA exists, \code{TwoProdFMA}
uses the emulated FMA of \citet{graillat2023emulation}.

\subsection{Run-time configuration}
\label{sec:prism-config}

The virtual precisions for \code{binary32} and \code{binary64}, the rounding
mode and the generator state are run-time settings, so changing them requires
no recompilation. In our experiments, PyTorch module hooks change them before
and after each target module (\cref{sec:exp-scoping}). Kernels read the
settings from a per-thread cache. A process-wide setter publishes new settings
and advances a configuration epoch, and each worker thread reloads its cache
when it observes a new epoch on its next operation, so a change made from the
Python thread reaches all of PyTorch's worker threads.

\subsection{Performance}
\label{sec:prism-bench}

\paragraph{Cost model.}
\label{sec:prism-cost}
Emulation replaces one instruction with an EFT (two to six operations), about
twenty-five vector operations for the rounding decision, and part of a random
draw. The control flow is fixed, so this cost does not depend on the operands.
The slowdown relative to native execution is therefore smaller when native
execution is memory-bound and larger when a call covers few lanes.

\paragraph{Benchmark.}
We time elementwise addition, multiplication, division and FMA over arrays of
$65{,}536$ values in \code{binary32} and \code{binary64}. Each loop is compiled
with \code{-O3} for 512-bit AVX-512 vectors in three builds: without
instrumentation (the native baseline), with the PRISM SR pass, and with
Verificarlo's MCA QUAD backend in random-rounding (RR) mode
(\cref{sec:background-eft}). We repeat the measurement $30$ times on Intel Xeon
Gold 6130 @ 2.1~GHz CPU (cache sizes: 32~KB L1, 1~MB L2, 22~MB L3 shared).

\begin{table}[t]
  \centering
  \caption{Slowdown of PRISM SR versus MCA RR relative to native execution
  ($N=65{,}536$, 30 runs; $t=24$ for \code{binary32}, $t=53$ for
  \code{binary64}). PRISM and MCA entries are the median ratios to
  native time for the same operation and data type (lower is better); MCA/PRISM
  is the median per-run ratio of MCA to PRISM time.}
  \label{tab:prism-vector-perf}
  \begin{tabular}{lcccccc}
    \toprule
                                                   & \multicolumn{3}{c}{\code{binary32}} &
    \multicolumn{3}{c}{\code{binary64}}                                                                                                                    \\
    \cmidrule(lr){2-4}\cmidrule(lr){5-7} Operation & PRISM                               & MCA
                                                   & MCA/PRISM                           & PRISM             & MCA   & MCA/PRISM                           \\
    \midrule
    add                                            & $31.1$                              & $142.4$           & $4.7$ & $11.6$           & $115.6$ & $10.0$ \\
    \rowcolor{black!6} mul                         & $32.1$                              & $143.7$           & $4.5$ & $11.3$           & $163.1$ & $14.6$ \\
    div                                            & $25.6$                              & $112.7$           & $4.4$ & $12.0$           & $229.9$ & $19.2$ \\
    \rowcolor{black!6} fma                         & $19.1$                              & $\phantom{0}88.3$ & $4.7$ & $\phantom{0}8.5$ & $597.4$ & $70.5$ \\
    \bottomrule
  \end{tabular}
\end{table}

\paragraph{Results.}
PRISM SR incurs an overhead of $\times8.5$--$32.1$, whereas MCA RR incurs
$\times88$--$597$ (\cref{tab:prism-vector-perf}). Compared to MCA RR, PRISM SR
is up to $\times4.7$ faster than for \code{binary32} operations and up to
$\times70.5$ faster for \code{binary64}. The gap comes from how each backend
obtains the exact result it rounds. MCA RR computes the result in a wider format
and adds its random perturbation there: in \code{binary64} for \code{binary32}
operations, which the hardware supports, and in \code{binary128} for
\code{binary64} operations, which most CPUs emulate in software. PRISM recovers
the exact result with EFTs in the working format, so both formats use hardware
instructions only. Performance is nearly independent of the virtual precision:
across precisions, PRISM times vary by less than $1\%$ and MCA times by at most
$5\%$.

%% file: analysis_1.tex
\section{Theoretical error analysis of SR versus RN}
\label{sec:analysis}

The GPT-2 architecture contains projections with very different reduction
lengths---the MLP down-projection accumulates over $n=3072$ terms, whereas the
attention projections accumulate over $n=768$---and the language-model head
applies the softmax directly to the perturbed logits.  These structural
differences create distinct opportunities for SR's variance--drift trade-off.
This section analyzes both regimes:
first, establishing a probabilistic forward error bound for the affine
projection layers (\code{attn\_c\_proj} and \code{mlp\_c\_proj}) via a vector Bienaym\'e--Chebyshev
construction (\cref{sec:analysis-mlp}); second, deriving an analytical decomposition
of cross-entropy loss change into drift and variance components at the output softmax
(\cref{sec:analysis-softmax}); and third, empirically validating this trade-off on DistilGPT-2
(\cref{sec:analysis-empirical-drift}).
Throughout, statements concern the precision at which a reduction \emph{accumulates}.
On physical hardware, a kernel nominally in \code{bfloat16} may store in \code{bfloat16},
multiply into a wider product, and accumulate in \code{binary32}; under PRISM,
every elementary operation rounds at the same virtual precision, $u_{\rm sto} = u_{\rm mul} = u_{\rm acc} = 2^{1-t}$,
so the relative ULP scale $u$ below is $2^{1-t}$ for the significand precision $t$
swept by our harness.

\subsection{Forward error analysis of linear projection layers}
\label{sec:analysis-mlp}

The multilayer perceptron (MLP) block in the GPT-2 architecture consists of two
consecutive linear projections surrounding an element-wise activation function.
Following the row-vector convention of \cref{eq:mlp},
\begin{equation*}
  \text{MLP}(x) = \phi(x W_1 + b_1) W_2 + b_2,
\end{equation*}
where $\phi$ is the GELU activation function~\citep{hendrycks2016gelu},
$W_1 \in \mathbb{R}^{d_{\rm model} \times d_{\rm ff}}$ is the expansion matrix (\code{mlp\_c\_fc}), and
$W_2 \in \mathbb{R}^{d_{\rm ff} \times d_{\rm model}}$ is the projection matrix (\code{mlp\_c\_proj}).
In DistilGPT-2, $d_{\rm model} = 768$ and $d_{\rm ff} = 3072$.

\paragraph{Limitations of backward error frameworks for nonlinear activations.}
The backward error framework of \citet{beuzeville2025deterministic}
analyzes multi-layer networks by absorbing rounding errors into equivalent weight
perturbations $\Delta W$. For a layer followed by an activation $\phi$, the
componentwise backward error bound includes a term proportional to
$l_\phi / \kappa_\phi(z)$, where $\kappa_\phi(z) = |z \phi'(z) / \phi(z)|$ is the
relative condition number of $\phi$ and $l_\phi u$ bounds the evaluation error
of $\phi$.

For GELU, this framework does not yield a uniform backward guarantee due to the
vanishing derivative at its local minimum. Specifically, at $z^* \approx -0.752$,
$\phi'(z^*) = 0$ while $\phi(z^*) \approx -0.170 \neq 0$. Consequently, the
condition number vanishes ($\kappa_\phi(z^*) = 0$), causing the backward error bound
$l_\phi / \kappa_\phi(z)$ to diverge in the neighborhood of $z^*$.

Although each ideal SR step is conditionally unbiased, propagating SR perturbations
through a nonlinear activation generally introduces composition bias
\citep[Section~4]{elarar2024bounds}. To isolate the reduction error from this
nonlinear composition, we focus on the affine down-projection \code{mlp\_c\_proj} (and, by extension, the attention projection \code{attn\_c\_proj}):
\begin{equation*}
  y = h W + b,
\end{equation*}
where $h \in \mathbb{R}^n$ is the input row vector, $W \in \mathbb{R}^{n \times m}$ is the projection matrix,
$b \in \mathbb{R}^m$ is the bias vector, and $y \in \mathbb{R}^m$ is the output ($m = d_{\rm model} = 768$).
For \code{mlp\_c\_proj}, $n = d_{\rm ff} = 3072$ and $h = \phi(x W_1 + b_1)$; for \code{attn\_c\_proj}, $n = d_{\rm model} = 768$ and $h$ is the concatenated multi-head attention output.
Because these projection layers have no subsequent activation ($\phi = \mathrm{id}$), the forward
error $e = \hat{y} - y$ directly quantifies the perturbation injected into the
residual stream.

\paragraph{Error model.}
Each component of $y$ is evaluated as a left-to-right sequential inner product
followed by a bias addition:
\begin{equation*}
  y_i = \sum_{j=1}^{n} h_j \, W_{ji} + b_i, \qquad i = 1, \dots, m,
\end{equation*}
totaling $2n$ operations per output component with a longest error-product chain
of $n + 1$ rounding factors (one multiplication, $n - 1$ accumulations, and the
final bias addition).

Let $u = 2^{1-t}$ denote the relative ULP scale at virtual significand precision $t$.
Assuming no overflow, underflow, or NaNs occur, every elementary operation satisfies
$\fl(a \mathbin{\mathrm{op}} b) = (a \mathbin{\mathrm{op}} b)(1 + \delta)$, where:
\begin{itemize}[leftmargin=1.5em]
  \item under RN, $|\delta| \le u/2$;
  \item under SR (mode-1, nearness), $|\delta| \le u$, with the mean-independence
        property $\mathbb{E}[\delta_k \mid \delta_1, \dots, \delta_{k-1}] = 0$~\citep[Lemma~5.2]{connolly2021stochastic}.
\end{itemize}

\paragraph{Deterministic forward error bound (RN).}
Under deterministic RN, rounding errors may align in the worst case. Applying Higham's
exact forward error bound for inner products~\citep[Lemma~3.1]{higham2002accuracy}, each component
and then the vector satisfy
\begin{align*}
  |\hat{y}_i - y_i| &\le \left( (1 + u/2)^{n+1} - 1 \right) \left( \sum_{j=1}^{n} |h_j| |W_{ji}| + |b_i| \right), \\
  \|\hat{y} - y\|_2 &\le \left( (1 + u/2)^{n+1} - 1 \right) \, \| |h| \, |W| + |b| \|_2.
\end{align*}
In the high-precision regime where $nu \ll 1$, the coefficient linearizes to $\approx \frac{n+1}{2} u$;
when $nu \gg 1$, it grows exponentially, reaching $\approx 8.11 \times 10^{80}$ at $t = 4$ for $n=3072$.

\paragraph{Probabilistic rounding and scalar bounds.}
In the first-order regime, mean-independence allows SR forward error bounds
proportional to $\mathcal{O}(\sqrt{n} \, u)$ rather than the $\mathcal{O}(n \, u)$
deterministic RN bound. Existing probabilistic frameworks for bounding scalar
inner product errors $e_i = \hat{y}_i - y_i$ follow one of two strategies:
\begin{itemize}[leftmargin=1.5em]
  \item \emph{Azuma--Hoeffding (AH) method}~\citep{connolly2021stochastic,ipsen2020probabilistic,de2025error}:
        Martingale bounds provide exponential probability tails, which tolerate a union bound
        over dimensions with only a mild $\mathcal{O}(\sqrt{\ln m})$ penalty. However, the bound on
        the increments contains a factor $(1 + u)^{n+1}$~\citep[Section~7]{elarar2024bounds}.
        When $nu \gg 1$, this factor explodes exponentially,
        rendering AH uninformative at low precisions.
  \item \emph{Bienaym\'e--Chebyshev (BC) method}~\citep{elarar2022error,elarar2024bounds}:
        Bounding the variance of the accumulated error avoids this exponential explosion:
        because the precision enters squared in the variance bound,
        $(1 + u^2)^{n+1} - 1 \approx (n+1) u^2$ remains small even at low precisions where $nu \gg 1$.
\end{itemize}
The core variance bound for error products under SR is given by the following lemma.

\begin{lemma}[{\citealp[Lemma~3.1]{elarar2022error}}]
  \label[lemma]{lem:sr-variance}
  Let $\varphi = \prod_{k=1}^{K}(1 + \delta_k)$, where $\delta_k$ are SR-nearness rounding
  errors satisfying $|\delta_k| \le u$ and $\mathbb{E}[\delta_k \mid \delta_1, \dots, \delta_{k-1}] = 0$.
  Then $\mathbb{E}[\varphi] = 1$ and $\Var(\varphi) \le (1 + u^2)^K - 1$.
\end{lemma}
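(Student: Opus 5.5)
We prove both claims by induction on $K$, using the tower property of conditional expectation together with the mean-independence hypothesis. Let $\mathcal{F}_k$ denote the $\sigma$-algebra generated by $\delta_1,\dots,\delta_k$, and set $\varphi_k = \prod_{j=1}^{k}(1+\delta_j)$ with $\varphi_0 = 1$. Then $\varphi_k = \varphi_{k-1}(1+\delta_k)$, and $\varphi_{k-1}$ is $\mathcal{F}_{k-1}$-measurable. Because $|\delta_j|\le u$, every $\varphi_k$ is bounded, so all the moments below exist and the conditional expectations are well defined.

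For the mean we condition on $\mathcal{F}_{k-1}$:
\[
\E[\varphi_k] = \E\bigl[\varphi_{k-1}\,\E[1+\delta_k \mid \mathcal{F}_{k-1}]\bigr] = \E[\varphi_{k-1}].
\]
The conditional factor equals $1$ by mean-independence. Induction from $\varphi_0=1$ then gives $\E[\varphi]=\E[\varphi_K]=1$.

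For the variance, since the mean is $1$ we have $\Var(\varphi)=\E[\varphi^2]-1$, so it suffices to show $\E[\varphi_K^2]\le(1+u^2)^K$. We condition again:
\[
\E[\varphi_k^2] = \E\bigl[\varphi_{k-1}^2\,\E[(1+\delta_k)^2\mid\mathcal{F}_{k-1}]\bigr].
\]
Expanding $(1+\delta_k)^2 = 1 + 2\delta_k + \delta_k^2$, the cross term has zero conditional mean by mean-independence. The last term satisfies $\delta_k^2\le u^2$ almost surely. Hence $\E[(1+\delta_k)^2\mid\mathcal{F}_{k-1}]\le 1+u^2$, and since $\varphi_{k-1}^2\ge 0$,
\[
\E[\varphi_k^2]\le(1+u^2)\,\E[\varphi_{k-1}^2].
\]
Induction gives $\E[\varphi_K^2]\le(1+u^2)^K$, which is the claim.

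The only delicate point is that the $\delta_k$ are not independent, so we cannot factor the expectation of the product directly. The conditioning argument handles this dependence. The nonnegativity of $\varphi_{k-1}^2$ is what lets us pull the almost-sure conditional bound $1+u^2$ outside the expectation without the sign of the factor interfering.
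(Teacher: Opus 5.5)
Your proof is correct and follows essentially the same route as the source: the paper gives no proof of its own and cites Lemma~3.1 of El~Arar et al., which is proved by exactly this conditioning induction. There, as in your argument, mean-independence removes the linear term in $\E[(1+\delta_k)^2\mid\mathcal{F}_{k-1}]$, the quadratic term is bounded by $u^2$, and iterating gives $\E[\varphi^2]\le(1+u^2)^K$ and hence $\Var(\varphi)\le(1+u^2)^K-1$.
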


Applying Chebyshev's inequality to each scalar component $e_i$ yields an error bound scaling with $\frac{1}{\sqrt{\alpha}}\sqrt{(1 + u^2)^{n+1} - 1}$.

\paragraph{Vector $L_2$-norm bound: combining BC with a multivariate Markov bound.}
Extending the scalar analysis to the full output vector $e \in \mathbb{R}^m$ exposes a limitation in existing bounds: AH survives the union bound across $m$ components but fails at low precision $t$, whereas scalar BC remains stable at low precision but grows rapidly under the union bound. Because Chebyshev's inequality has a polynomial tail, setting the per-component failure probability to $\alpha / m$ inflates the confidence multiplier by $\sqrt{m}$, from $1/\sqrt{\alpha} \approx 4.47$ to $\sqrt{m/\alpha} \approx 124$ ($m = 768$, $\alpha = 0.05$). This penalty inflates the bound to $\approx 6870 \, u$, wiping out SR's advantage over RN.

To circumvent this limitation, our approach combines the low-precision robustness of BC with a single multivariate Markov step applied directly to the expected squared $L_2$-norm $\mathbb{E}[\|e\|_2^2]$. This bypasses the union bound, keeping the confidence multiplier $1/\sqrt{\alpha}$ independent of the output dimension $m$.

\begin{theorem}[Vector Bienaym\'e--Chebyshev Bound]
  \label{thm:vector-bc}
  Under SR-nearness, the forward error $e = \hat{y} - y$ of the projection $y = h W + b$
  satisfies, for any $\alpha \in (0, 1)$, with probability at least $1 - \alpha$:
  \begin{equation*}
    \|\hat{y} - y\|_2 \le \frac{1}{\sqrt{\alpha}} \sqrt{(1 + u^2)^{n+1} - 1} \, \| |h| \, |W| + |b| \|_2.
  \end{equation*}
  The confidence multiplier $1/\sqrt{\alpha}$ is independent of the output dimension $m$.
\end{theorem}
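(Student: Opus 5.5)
The argument has three stages: write each error component as a linear combination of zero-mean error-product terms, bound its second moment, and apply one Markov step to $\|e\|_2^2$.

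\emph{Step 1: error products.} Fix an output index $i$. Unroll the sequential evaluation $\hat y_i = \fl(\cdots\fl(\fl(h_1W_{1i}) + \fl(h_2W_{2i}))\cdots + b_i)$. The standard inner-product expansion gives
\[
\hat y_i = \sum_{j=1}^{n} h_j W_{ji}\,\varphi_{ji} + b_i\,\varphi_{0i},
\]
where each $\varphi_{ji}$ is a product of at most $n+1$ factors $(1+\delta_k)$. These factors come from the multiplication rounding for term $j$ and from all subsequent accumulation roundings, including the bias addition. Hence
\[
e_i = \sum_{j} h_jW_{ji}(\varphi_{ji}-1) + b_i(\varphi_{0i}-1).
\]
Ordering the roundings in execution order, \cref{lem:sr-variance} applies to each $\varphi_{ji}$. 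It gives $\E[\varphi_{ji}]=1$ and $\Var(\varphi_{ji})\le (1+u^2)^{K_j}-1\le (1+u^2)^{n+1}-1 =: \gamma$, since $K_j\le n+1$ and the bound is monotone in $K$.

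\emph{Step 2: second moment per component.} I would avoid computing cross-covariances between the $\varphi_{ji}$, since they share factors. Instead I use Minkowski's inequality in $L^2(\Omega)$:
\[
\sqrt{\E[e_i^2]} \le \sum_j |h_j||W_{ji}|\sqrt{\E[(\varphi_{ji}-1)^2]} + |b_i|\sqrt{\E[(\varphi_{0i}-1)^2]} \le \sqrt{\gamma}\,c_i ,
\]
with $c_i = (|h||W|+|b|)_i$. This works because $\E[(\varphi-1)^2]=\Var(\varphi)$ when $\E[\varphi]=1$. Squaring gives $\E[e_i^2]\le \gamma c_i^2$. This step is crude but dimension-free, and it yields exactly the stated constant without any independence assumption across terms.

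\emph{Step 3: sum and one Markov step.} Summing over $i$ by linearity of expectation gives $\E\|e\|_2^2 \le \gamma\,\||h||W|+|b|\|_2^2$. Here the different output components may share randomness, but expectation does not care. Markov's inequality applied to the nonnegative variable $\|e\|_2^2$ at level $\E\|e\|_2^2/\alpha$ gives, with probability at least $1-\alpha$,
\[
\|e\|_2^2\le \gamma\,\||h||W|+|b|\|_2^2/\alpha .
\]
Taking square roots yields the theorem. The multiplier is $1/\sqrt\alpha$ because only one probabilistic event, independent of $m$, was used.

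\emph{Main obstacle.} The delicate point is Step 1. I must check that each $\varphi_{ji}$ really satisfies the martingale hypothesis of \cref{lem:sr-variance}. Its factors are a subsequence of the rounding errors, and conditioning on the full past history implies, by the tower property, mean-independence relative to the subsequence. I also need the no-underflow assumption so that the products $h_jW_{ji}$ have relative error $\delta$. I also need to make sure that the exact products $h_jW_{ji}$, rather than the computed partial sums, multiply the $\varphi$'s. This holds because the $(1+\delta)$ model absorbs all rounding into the multiplicative factors.
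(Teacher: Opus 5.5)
Your proposal is correct and matches the paper's proof step for step: the same error-product expansion of $e_i$, Minkowski's inequality in $L^2$ combined with \cref{lem:sr-variance} to get $\E[e_i^2]\le\bigl((1+u^2)^{n+1}-1\bigr)c_i^2$, summation over the $m$ components, and a single Markov step on $\|e\|_2^2$. The paper leaves implicit that each $\varphi_{ji}$ is built from a subsequence of the rounding errors and inherits mean-independence by the tower property; your remark on this is the right justification.
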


\begin{proof}
  The error in output component $i$ is
  \begin{equation*}
    e_i = \sum_{j=1}^{n} h_j W_{ji} (\varphi_{ij} - 1) + b_i (\varphi_{b,i} - 1),
  \end{equation*}
  where $\varphi_{ij}$ and $\varphi_{b,i}$ represent the error products along the accumulation
  chains, containing at most $n + 1$ and $1$ rounding factors, respectively. By mean-independence,
  $\mathbb{E}[\varphi_{ij}] = 1$, $\mathbb{E}[\varphi_{b,i}] = 1$, and $\mathbb{E}[e_i] = 0$.

  Although the terms $\varphi_{ij} - 1$ share common rounding errors across additions,
  Minkowski's inequality in $L_2$ bounds the standard deviation of their linear combination:
  \begin{equation*}
    \sqrt{\Var(e_i)} \le \sum_{j=1}^{n} |h_j| |W_{ji}| \sqrt{\Var(\varphi_{ij})} + |b_i| \sqrt{\Var(\varphi_{b,i})}.
  \end{equation*}
  Applying \cref{lem:sr-variance} with $\Var(\varphi_{ij}) \le (1 + u^2)^{n+1} - 1$ and
  $\Var(\varphi_{b,i}) \le u^2 \le (1 + u^2)^{n+1} - 1$ yields
  \begin{equation*}
    \Var(e_i) \le \left( (1 + u^2)^{n+1} - 1 \right) \left( \sum_{j=1}^{n} |h_j| |W_{ji}| + |b_i| \right)^2.
  \end{equation*}
  By linearity of expectation, the expected squared $L_2$-norm satisfies
  \begin{equation*}
    \mathbb{E}[\|e\|_2^2] = \sum_{i=1}^{m} \Var(e_i) \le \left( (1 + u^2)^{n+1} - 1 \right) \, \| |h| \, |W| + |b| \|_2^2.
  \end{equation*}
  Applying Markov's inequality to the squared $L_2$-norm $\|e\|_2^2$ (which constitutes the
  multivariate Bienaym\'e--Chebyshev inequality) gives
  $\mathbb{P}\left( \|e\|_2^2 \ge \frac{1}{\alpha} \mathbb{E}[\|e\|_2^2] \right) \le \alpha$.
  Taking the square root completes the proof.
\end{proof}

\paragraph{Numerical evaluation on DistilGPT-2.}
While \cref{thm:vector-bc} holds for any affine projection of reduction length $n$,
we evaluate the bounds concretely on the feedforward down-projection \code{mlp\_c\_proj}
($n = d_{\rm ff} = 3072$, $m = d_{\rm model} = 768$) at 95\%
confidence ($\alpha = 0.05$, $1/\sqrt{\alpha} = \sqrt{20} \approx 4.47$).
Because \code{mlp\_c\_proj} has four times the reduction length of \code{attn\_c\_proj} ($n = 768$),
it has the longest accumulation chains and therefore tests the error bounds in the most demanding regime.
In the limiting first-order regime $u\to0$ (equivalently, $t\to\infty$):
\begin{itemize}[leftmargin=1.5em]
  \item RN deterministic: $(1 + u/2)^{n+1} - 1 \approx \frac{n+1}{2} u \approx 1537 \, u$;
  \item SR vector BC: $\frac{1}{\sqrt{\alpha}} \sqrt{(1 + u^2)^{n+1} - 1} \approx \sqrt{20} \sqrt{n+1} \, u \approx 248 \, u$.
\end{itemize}
The SR coefficient is asymptotically smaller than the RN coefficient by a factor
of approximately $6.2$. At finite precision the exact ratio approaches this
limit from above: it is $14.37$ at $t=11$ and $6.35$ at $t=16$.

\begin{figure}[t]
  \centering
  \safeincludegraphics[width=0.75\linewidth]{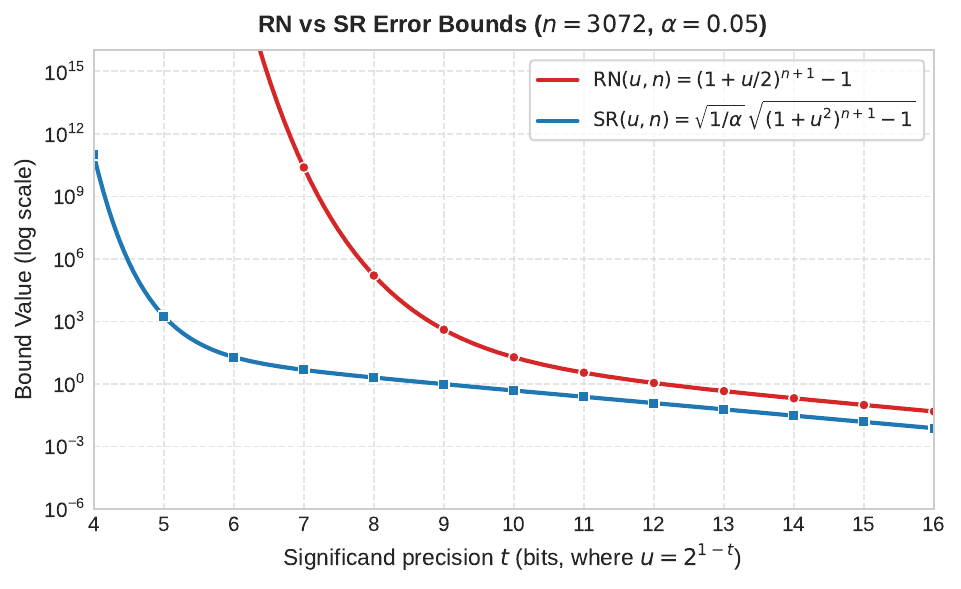}
  \caption{Deterministic RN and SR forward-error bound
  coefficients for \code{mlp\_c\_proj} in DistilGPT-2 ($n=3072$).}
  \label{fig:validity-bounds}
\end{figure}

As shown in \cref{fig:validity-bounds}, the deterministic RN bound suffers from exponential error
compounding at low precision ($nu \gg 1$), reaching $2.43 \times 10^{10}$ at $t = 7$ and $8.11 \times 10^{80}$ at $t = 4$.
In contrast, because precision enters squared under stochastic rounding, the SR
vector BC bound grows much more slowly ($4.73$ at $t = 7$). At $t=11$ the exact RN and SR
coefficients are $3.482$ and $0.242$, a ratio of $14.37$; the ratio decreases
toward $6.2$ only in the limit $t\to\infty$.

For fixed $\alpha$ in the first-order regime, \cref{thm:vector-bc} yields an
$\mathcal{O}(\sqrt{n}u)$ bound for SR, whereas deterministic RN scales as
$\mathcal{O}(nu)$. Both bounds are worst-case over the inputs and normalize by
$\| |h| \, |W| + |b| \|_2$, but they treat arithmetic errors
differently. Under RN, rounding errors are deterministic functions of the data
that can align constructively in the worst case. Under SR, mean-independence
yields the $\sqrt{n}$ factor via the Bienaym\'e--Chebyshev variance bound. While
both results are upper bounds, and realized RN errors rarely achieve worst-case
alignment, the comparison illustrates the fundamental contrast between the two modes:
deterministic rounding permits systematic linear drift along the reduction chain,
whereas stochastic rounding restricts error growth to a sub-linear
$\mathcal{O}(\sqrt{n}u)$ envelope.

The downstream captures in \cref{sec:analysis-empirical-drift} show larger RN than SR
sample logit drift after reducing \code{mlp\_c\_proj}. These propagated
measurements do not directly test the single-reduction bounds above.

%% file: analysis_2.tex
\FloatBarrier
\subsection{Error propagation and loss decomposition at the output softmax}
\label{sec:analysis-softmax}

\Cref{sec:analysis-mlp} bounds the perturbation that low-precision arithmetic
injects into one layer.  At the network output, the error travels to the logits
and alters each token's cross-entropy loss.
This subsection analyzes that interface to identify, term by term,
why SR and RN affect the language-model head differently.

\paragraph{Exact decomposition of the cross-entropy loss change.}
Fix one token with reference logits $\lambda\in\R^V$, target class~$y$, and
predictive distribution $p=\softmax(\lambda)$.  Its cross-entropy loss is
\begin{equation}
  \mathcal L(\lambda)=-\lambda_y+\log\sum_{j=1}^Ve^{\lambda_j}.
  \label{eq:token-cross-entropy}
\end{equation}
Let the low-precision run perturb logit $j$ by~$\Delta_j$.  Substituting
$\lambda+\Delta$ into \cref{eq:token-cross-entropy} and subtracting gives
the exact signed loss change
\begin{equation*}
  \delta\mathcal L(\Delta)
  :=\mathcal L(\lambda+\Delta)-\mathcal L(\lambda)
  =-\Delta_y+\log\sum_{j=1}^Vp_je^{\Delta_j}.
\end{equation*}
Let $\widetilde p=\softmax(\lambda+\Delta)$ and let
$D_{\mathrm{KL}}(p\,\|\,\widetilde p)$ denote the Kullback--Leibler (KL)
divergence from $p$ to $\widetilde p$. Direct substitution gives
\begin{equation*}
  D_{\mathrm{KL}}(p\,\|\,\widetilde p)
  =\sum_{j=1}^V p_j\log\frac{p_j}{\widetilde p_j}
  =\log\sum_{j=1}^V p_j e^{\Delta_j}-p^\top\Delta,
\end{equation*}
and therefore, with $g:=p-e_y$, the exact decomposition
\begin{equation}
  \delta\mathcal L(\Delta)
  =g^\top\Delta+D_{\mathrm{KL}}(p\,\|\,\widetilde p).
  \label{eq:one-token-kl-decomposition}
\end{equation}
The linear term is signed and records the target-specific shift, whereas the
KL term is non-negative and records the change in the full predictive
distribution.

\paragraph{Local quadratic form of the KL term.}
The KL divergence satisfies $D_{\mathrm{KL}}(p\,\|\,p)=0$ and has vanishing gradient
at $\Delta=0$. Its local quadratic approximation is governed by the Fisher
information matrix of the categorical distribution parameterized by logits,
\begin{equation*}
  H = \diag(p) - pp^\top,
\end{equation*}
which coincides with the Hessian of the cross-entropy loss $\nabla^2_\lambda \mathcal L(\lambda)$
\citep[Section~6]{martens2020new}. Although the KL divergence is asymmetric, both
orientations share the same second-order expansion at $\Delta=0$:
\begin{equation}
  D_{\mathrm{KL}}(p\,\|\,\widetilde p)
  =\tfrac12\,\Delta^\top H\Delta+r,
  \qquad r=\mathcal O(\|\Delta\|^3).
  \label{eq:logit-kl-second-order}
\end{equation}
Because $H$ is the covariance matrix of a single categorical draw with probabilities $p$,
for any perturbation $\Delta \in \R^V$ the quadratic form equals the variance across classes:
\begin{equation}
  \Delta^\top H \Delta = \sum_{j=1}^V p_j \Delta_j^2 - (p^\top\Delta)^2 = \Var_p(\Delta) \ge 0,
  \label{eq:fisher-variance-identity}
\end{equation}
confirming that $H \succeq 0$ and that the local KL penalty charges variations in
$\Delta$ across classes considered plausible by $p$.
Fisher information similarly guides parameter sensitivity in post-training
quantization (e.g., SqueezeLLM, \citealp{kim2024squeezellm}); in logit space,
$H$ arises analytically to penalize non-uniform operational perturbations directly.

An exact consequence of the logit formulation is \emph{shift invariance}
\citep{blanchard2021accurately}: any uniform perturbation $\Delta = c\bfone$ leaves
the predictive distribution identical ($\widetilde p = \softmax(\lambda + c\bfone) = p$),
so that $D_{\mathrm{KL}}(p\,\|\,\widetilde p) = 0$.
Likewise, $g^\top (c\bfone) = c(p-e_y)^\top\bfone = 0$, ensuring that $\delta\mathcal L(c\bfone) = 0$
exactly. The one-dimensional subspace $\operatorname{span}(\bfone)$ constitutes the null
space of the Fisher metric ($H\bfone = \mathbf{0}$).

\paragraph{The two probability spaces.}
Conditional on a fixed token~$w$, two distinct probability spaces arise for SR
and RN.
\begin{enumerate}[leftmargin=1.7em,label=(\roman*)]
  \item Under SR, every rounded elementary operation along the trajectory draws
        fresh pseudorandom bits. This operational randomness makes the
        logit perturbation $\Delta_{\SR}^{(w)}$ a random vector, driven by
        elementary rounding errors that satisfy the mean-independence of \cref{sec:analysis-mlp}.

  \item Under RN, every operation returns one deterministic value: there is no
        operational randomness. Conditional on a fixed token,
        \begin{equation}
          \E_{\RN}[\Delta_{\RN}^{(w)}\mid w]=\Delta_{\RN}^{(w)},
          \qquad
          \Sigma_{\RN}^{(w)}:=\operatorname{Cov}_{\RN}(\Delta_{\RN}^{(w)}\mid w)=0.
          \label{eq:rn-degenerate}
        \end{equation}
\end{enumerate}

\paragraph{Drift--variance decomposition.}
For either mode $m\in\{\SR,\RN\}$, decompose the logit perturbation as
\begin{equation}
  \Delta_m=b_m+\xi_m,
  \qquad
  \E_m[\xi_m]=0,
  \label{eq:local-bias-cov}
\end{equation}
where $b_m:=\E_m[\Delta_m]$ is the drift and $\xi_m$ is the centered
fluctuation, with covariance $\Sigma_m:=\operatorname{Cov}_m(\xi_m)$.
By \cref{eq:rn-degenerate}, on a fixed token RN carries
$b_{\RN}=\Delta_{\RN}$ and $\Sigma_{\RN}=0$: its entire perturbation is
drift.

Substituting $\Delta_m=b_m+\xi_m$ into the local KL expansion
\cref{eq:logit-kl-second-order} and taking expectations $\E_m[\cdot]$ over
operational randomness (over rounding draws for SR, and degenerate for
deterministic RN), the linear term in \cref{eq:one-token-kl-decomposition} yields
$\E_m[g^\top\Delta_m]=g^\top b_m$.
The quadratic term expands as
\begin{equation*}
  \Delta_m^\top H\Delta_m
  =b_m^\top Hb_m + 2b_m^\top H\xi_m + \xi_m^\top H\xi_m.
\end{equation*}
Taking expectations, the cross term vanishes since $\E_m[\xi_m]=0$, and the
quadratic fluctuation term evaluates to $\E_m[\xi_m^\top H\xi_m] = \operatorname{tr}(H\Sigma_m)$.
Thus the KL contribution itself separates as
\begin{equation*}
  \E_m[D_{\mathrm{KL}}(p\,\|\,\widetilde p_m)]
  =\tfrac12\,b_m^\top Hb_m +\tfrac12\operatorname{tr}(H\Sigma_m) +R_m,
\end{equation*}
where $R_m:=\E_m[r]$.  Combining this KL decomposition with the linear
target-shift term yields the expected excess loss:
\begin{equation}
  \E_m[\delta\mathcal L]
  =\underbrace{g^\top b_m}_{D_m:\text{ drift}}
  +\underbrace{\tfrac12\,b_m^\top Hb_m}_{Q_m:\text{ curvature of drift}}
  +\underbrace{\tfrac12\operatorname{tr}(H\Sigma_m)}_{V_m:\text{ variance penalty}}
  +\underbrace{R_m}_{\text{remainder}},
  \label{eq:head-full-comparison}
\end{equation}
where $D_m$ is signed; $Q_m$ and $V_m$ are non-negative by $H\succeq0$.
Removing drift is not free: an unbiased mode avoids the drift terms $D_m+Q_m$,
but pays the variance penalty $V_m$.

\subsection{Empirical validation of the drift--variance trade-off}
\label{sec:analysis-empirical-drift}

We evaluate this trade-off on DistilGPT-2 using $N=1020$ scored WikiText-2
positions, with reference loss $\mathcal L_0=4.1354$ nats.
The decomposition uses $32$ SR seeds at $t=4,\ldots,10$, with one
deterministic RN run per precision. We use bias-corrected quadratic
estimators.  The logit measurement protocol and the estimators are detailed in \cref{app:drift-measurements}.

\paragraph{SR avoids systematic drift but incurs local fluctuation.}
The decomposition $\Delta_m = b_m + \xi_m$ (\cref{eq:local-bias-cov})
splits each logit perturbation into drift $b_m = \E_m[\Delta_m]$ and centered
fluctuation $\xi_m$. For RN the split is trivial: $b_{\RN} = \Delta_{\RN}$
and $\xi_{\RN} = 0$, so the perturbation at each token--coordinate pair is
entirely deterministic drift. For SR, the perturbation is stochastic,
so its expected magnitude is measured by its operational root mean square.
All plotted quantities represent the operational statistics of a
single forward pass; multiple seeds ($S=32$) are used exclusively to construct
empirical estimators for these quantities.

\Cref{fig:drift-distribution} visualizes the three quantities that enter the
loss decomposition (\cref{eq:head-full-comparison}). Row~(i) shows the
operational perturbation magnitude per run across all token--coordinate pairs.
At the MLP down-projection, RN accumulates deterministic drift, whereas SR's
per-run perturbation is smaller. At the head, however, SR's per-run perturbation
is larger than RN's. This does not contradict the worst-case bounds of
\cref{thm:vector-bc}. At the head, inputs are centered by LayerNorm;
empirically, we observe that this centering induces cancellation in RN rounding
errors, preventing the accumulation of systematic drift. Because the projection
is linear, SR guarantees unbiasedness, but incurs higher local variance to
do so (local errors up to $1.0u$, compared to RN's strict $0.5u$ bound). In the
absence of systematic drift, the accumulated perturbation is governed by this
local variance, which is consistent with SR perturbations being larger than RN
at the head.

Nevertheless, not all of this perturbation magnitude translates to excess loss.
By shift invariance (\cref{eq:fisher-variance-identity}), a per-token uniform
shift leaves softmax probabilities unchanged, so only the non-uniform
component feeds into $Q_m$. Row~(ii) isolates this loss-relevant component
by centering the drift (subtracting each token's vocabulary mean from $b_j$).
The reduction is pronounced at the MLP: RN's spread shrinks by an order
of magnitude, confirming that most of its accumulation was uniform.
SR's centered drift is near-zero everywhere.

Row~(iii) removes the uniform component of the fluctuation (to which the
softmax is insensitive by shift invariance). Let $\tilde\xi_j = \xi_j - p^\top\xi$
denote the $p$-centered operational fluctuation; the row reports its standard
deviation $\mathrm{std}(\tilde\xi_j)$. RN is deterministic
($\xi_{\RN}=0$). The centered fluctuation is smaller at the MLP than at
the head.
MLP noise is predominantly a uniform shift of the logits, whereas
head noise changes the differences between logits and therefore the softmax
output.
\begin{figure}[!htb]
  \centering
  \safeincludegraphics[width=\linewidth]{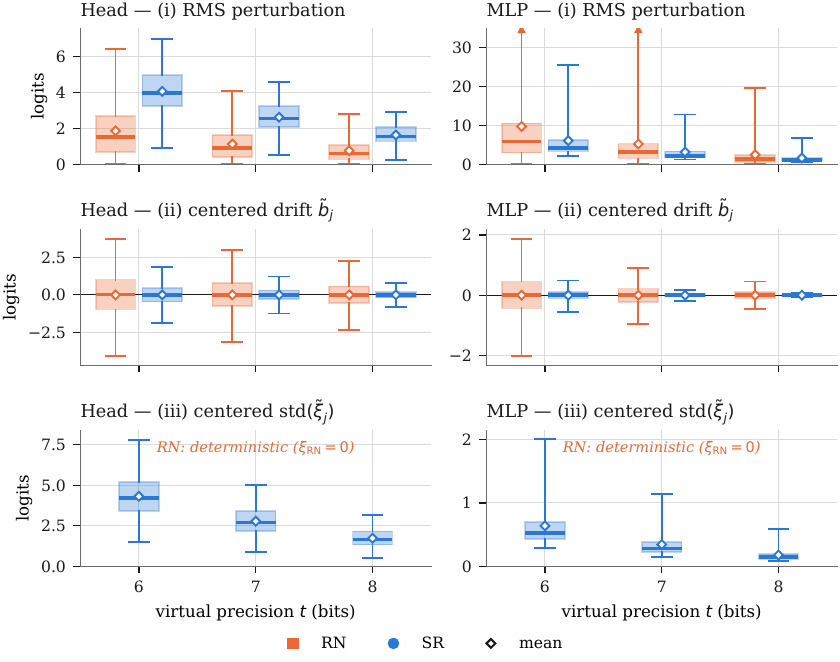}
  \vspace{-1em}
  \caption{Operational decomposition of logit perturbations across all $51{,}262{,}140$ token--coordinate pairs (for the MLP, rounding errors are injected at the down-projection and propagated to the final logits).
  \textbf{(i)} Per-run RMS perturbation for pair $j$: $|\Delta_j|$ for deterministic RN ($\xi_j=0$), and $\sqrt{\mathbb{E}[\Delta_j^2]} = \sqrt{b_j^2 + \operatorname{std}(\xi_j)^2}$ for SR.
  \textbf{(ii)} Centered drift $\tilde b_j$ (per-token vocabulary mean removed): non-uniform component charged via $Q_m$.
  \textbf{(iii)} Operational standard deviation of the $p$-centered fluctuation $\mathrm{std}(\tilde\xi_j)$, where $\tilde\xi_j = \xi_j - p^\top\xi$; RN: $\xi_{\RN}=0$.
  Boxes IQR, median line, 1st--99th-percentile whiskers; vertical scales differ.}
  \label{fig:drift-distribution}
\end{figure}

\paragraph{The role of input distributions.}
The accumulation of deterministic rounding errors in inner products depends heavily on the distribution of the operands \citep{elarar2022error}. When inputs are predominantly positive (such as the outputs of the GELU activation at the MLP down-projection), RN rounding errors become biased and, depending on the weight distribution, systematically accumulate into a large drift. Conversely, when inputs are zero-centered (such as the outputs of LayerNorm at the language-model head), positive and negative RN errors tend to cancel, preventing the accumulation of systematic drift.

\paragraph{Loss decomposition at the language-model head.}
\label{par:head-cost-unbiasedness}
Although SR produces smaller logit drift at the head, it incurs higher excess
loss than RN throughout $t=4,\ldots,10$
(\cref{tab:decomposition,fig:head-decomposition-terms}). The decomposition in
\cref{eq:head-full-comparison} explains the reversal. RN is deterministic
($V_{\RN}=0$), so its perturbation contributes only through $D_{\RN}$,
$Q_{\RN}$, and $R_{\RN}$. SR's operational randomness introduces non-uniform
fluctuations---the large $p$-centered fluctuation visible in
\cref{fig:drift-distribution}~(iii)---that incur the quadratic cost
$V_{\SR}=\tfrac12\operatorname{tr}(H\Sigma_{\SR})$. At the head the noise is
genuinely non-uniform across the vocabulary, so the softmax cannot ignore it.

Over $t=6,\ldots,9$, the variance penalty $V_{\SR}$ accounts for $100$--$105\%$ of SR's
excess loss, while $D_{\SR}$ and $Q_{\SR}$ lie within $1.6$ standard errors of
zero. For RN, the perturbation carries no fluctuation ($V_{\RN}=0$);
its non-uniform drift produces $Q_{\RN}=1.58$ and $R_{\RN}=0.50$
nats at $t=6$, for a total excess of $2.12$ nats---less than half of SR's
$4.40$ nats.

\paragraph{Loss decomposition at the MLP down-projection.}
At the MLP down-projection, the loss ordering reverses:
SR has an excess loss of $0.05$ nats against RN's $0.24$ at $t=6$
(\cref{fig:mlp-decomposition-terms,tab:decomposition}).
The 3072-length inner product allows deterministic rounding errors under RN
to align across many terms, producing a wide, non-uniform drift dispersion
that incurs a curvature penalty of $Q_{\RN}=0.29$ nats.

SR prevents this systematic alignment, keeping $Q_{\SR}$ at $0.017$ nats.
Although SR's raw fluctuation is comparable to or larger than at the head
($6.23$ versus $4.27$ logits RMS at $t=6$), the noise has far fewer degrees of
freedom. At the head, SR injects rounding errors directly into all $50{,}257$
logits, scrambling the differences between them. At the MLP down-projection, SR
injects noise into the $768$-dimensional hidden state. Restricted to these $768$
degrees of freedom, the $50{,}257$ logits are highly coupled.  This
dimensionality restriction likely explains why the MLP fluctuation manifests
predominantly as a uniform shift across the vocabulary, which the softmax simply
ignores. The near-uniform shift also depends on the trained output weights and
the directions of the propagated error. As a result, the $p$-centered
fluctuation in \cref{fig:drift-distribution}(iii) collapses at the MLP relative
to the head, yielding $V_{\SR}=0.083$ nats, fifty times smaller than at the
head. The reduction in drift curvature therefore more than compensates for SR's
variance penalty.

\begin{figure}[!htb]
  \centering
  \safeincludegraphics[width=\linewidth]{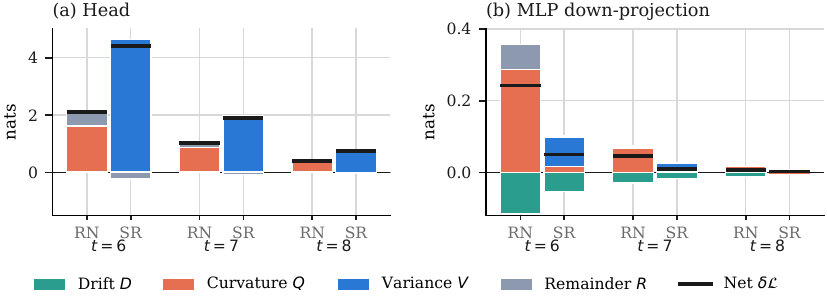}
  \caption{Stacked decomposition of expected excess loss
    $\E_m[\delta\mathcal L] = D_m + Q_m + V_m + R_m$ at the head and MLP
    down-projection at $t=6,7,8$. Positive components stack upward from zero
    and negative components downward; solid horizontal marks indicate the
    measured mean excess loss (with one standard error for SR; RN is
    deterministic). At the head, the variance penalty $V$
    dominates SR excess loss; at the MLP down-projection, RN is dominated by drift
    curvature $Q$. Full values are in \cref{tab:decomposition}.}
  \label{fig:head-decomposition-terms}
  \label{fig:mlp-decomposition-terms}
\end{figure}

%% file: experiments.tex
\section{Experiments}
\label{sec:experiments}

\Cref{sec:analysis} bounds the forward error of a single reduction and
decomposes its effect on the loss at the output softmax. Neither result
predicts the perplexity ordering of SR and RN once rounding errors propagate
through the network: the bounds are worst-case and per reduction, and the loss
decomposition is evaluated one site at a time. This section measures that
ordering on DistilGPT-2 at matched significand precision, with the scope of
reduced precision ranging from the whole network to one projection in one block
(\cref{tab:sweeps}). The sweeps address three questions: which rounding rule
gives lower perplexity at each site, how per-site effects combine across sites
and across depth, and whether the per-site preferences yield a mixed-rounding
assignment that improves on RN at identical per-site precisions.

\subsection{Experimental setup}
\label{sec:exp-setup}

\paragraph{Model and evaluation benchmark.}
We evaluate the public \code{distilbert/distilgpt2} checkpoint
\citep{distilgpt2modelcard,wolf2020transformers}. The architecture retains the
layer widths of GPT-2 small ($d_{\rm model} = 768$, $d_{\rm ff} = 3072$, $H =
  12$ heads, $d_{\rm head} = 64$, vocabulary $V = 50257$, as illustrated in
\cref{fig:architecture}) and distills the depth to six blocks. It therefore has
the reduction lengths that enter the bounds of \cref{sec:analysis-mlp}:
$n=768$ for \code{attn\_c\_attn}, \code{attn\_c\_proj}, and \code{mlp\_c\_fc},
and $n=3072$ for \code{mlp\_c\_proj}. Six blocks keep per-operation software
emulation tractable. All evaluations use the pretrained weights without
fine-tuning, calibration, or quantization-aware adjustment, so that
configurations differ only in the arithmetic under test.

Evaluation uses the WikiText-2 test split \citep{merity2017pointer}. The harness
selects the leading $1\%$ of lines, joins them with double newlines, tokenizes
the resulting text, and evaluates its first $1024$ tokens. These are divided
into four non-overlapping contexts of $256$ tokens, with context reset between
them. Each context contributes $255$ next-token targets, giving $1020$ scored
positions. Perplexity is the exponential of the mean cross-entropy over these
positions; the PRISM RN reference at \code{binary32} significand precision
($t=24$) is $\mathrm{PPL}_0 = 62.51$. When effects are compared or summed, we
also report the excess loss $\ell = \ln(\mathrm{PPL}/\mathrm{PPL}_0)$ in nats,
which equals the increase in mean cross-entropy over the $1020$ positions and is
the quantity decomposed in \cref{eq:head-full-comparison}.

\paragraph{Instrumented arithmetic and scoping.}
\label{sec:exp-scoping}
Inference is executed within Fuzzy PyTorch \citep{gonzalezpepe2026fuzzy}. During
the framework's compilation, the Verificarlo compiler
pass~\citep{denis2016verificarlo} intercepts native floating-point operations
and redirects them to PRISM's vectorized C++ backend (\cref{sec:prism}).
Instrumented operations within the target scope are rounded to a virtual
significand precision $t \le 24$ using either SR or untied RN, in which half-ULP
ties round away from zero rather than to even (\cref{sec:prism}). All SR--RN
comparisons in this section use untied RN and not IEEE round-to-nearest with
ties-to-even (RNE) at reduced precision. The two differ only on exact ties. RN
introduces no randomness, and repeated RN runs are bitwise reproducible.

The harness attaches PyTorch forward pre-hooks that set precision $t$ and the
rounding mode immediately before the target module executes, and forward
post-hooks that restore $t = 24$ under RN on exit. Inference runs in PyTorch
eager mode without cross-module operator fusion. Every operation outside the
target scope, including the final loss and perplexity computation, runs at
$t=24$ under RN, so an SR run and its paired RN control differ only in the
rounding rule applied within the target scope.

The component and sublayer sweeps reduce precision at the target in all six
blocks at once; the depth sweeps reduce it in one block or in a prefix of
blocks (\cref{tab:sweeps}). The \code{attention} and \code{mlp} scopes cover
every operation inside the corresponding module of each block: the query--key--value
and output projections, the score and value products, and the softmax for
\code{attention}; \code{mlp\_c\_fc}, the GELU, and \code{mlp\_c\_proj} for
\code{mlp}. The \code{attn\_qkav} scope covers only the products $QK^\top$
(reduction length $d_{\rm head}=64$) and $AV$ (at most $T=256$), with the
softmax at $t=24$. The \code{lm\_head} scope is the projection onto the
vocabulary logits.

\begin{table}[t]
  \centering
  \caption{Sweep levels. All levels compare SR and untied RN at matched
    significand precision $t$; operations outside the target scope run at
    $t=24$ under RN.}
  \label{tab:sweeps}
  \small
  \begin{tabular}{@{}l >{\raggedright\arraybackslash}p{4.2cm} c >{\raggedright\arraybackslash}p{4.8cm}@{}}
    \toprule
    Level & Target scope                                                                  & $t$ swept & Question \\
    \midrule
    Global
          & Whole model
          & $4\text{--}14$
          & SR vs. RN comparison under uniform precision                                                         \\
    Component
          & \code{attention}, \code{mlp}, \code{lm\_head}, \code{attn\_qkav} (all blocks)
          & $4\text{--}14$
          & Site-dependent sensitivity across main modules                                                       \\
    Sublayer
          & Four projections \code{attn\_*}, \code{mlp\_*} (all blocks)
          & $4\text{--}14$
          & Impact of reduction length ($n=768$ vs.\ $3072$)                                                     \\
    Depth
          & \code{mlp\_c\_proj} (single blocks; prefixes $0\text{--}k$)
          & $4,5,6$
          & Dependence on position; additivity of per-block effects                                              \\
    Mixed recipe
          & SR at output projections, RN elsewhere
          & $6,7,8$
          & Mixed rounding versus all-RN at identical per-site precisions                                        \\
    \bottomrule
  \end{tabular}
\end{table}

\paragraph{Statistical protocol.}
The global, component, sublayer, and depth sweeps use five independent seeds per
SR configuration, and the mixed-rounding comparison in
\cref{sec:exp-composition} uses five seeds per mixed configuration. Results are
reported as sample mean $\pm$ standard deviation; for SR, excess loss is
averaged over seeds. RN is deterministic at a fixed execution configuration and
is run once. We define the perplexity gap as $\Delta = \overline{\SR} - \RN$, so
negative values favor SR. We regard the gap as statistically distinguishable from
zero when the RN value falls outside the two-sided $95\%$ Student-$t$ confidence
interval for the SR mean ($df=4$). The evaluated text is fixed, so every
interval is conditional on the $1020$ scored positions and does not describe
variation across texts.

\subsection{Results}
\label{sec:exp-results}

The experiments yield four findings:
\begin{enumerate}[label=(\roman*),leftmargin=2em]
  \item \textbf{RN is better under uniform rounding.}
        Reducing only the language-model head reproduces most of the
        SR--RN gap, suggesting that the head drives the global preference
        for RN (\cref{fig:global,fig:component}).

  \item \textbf{The preferred rounding rule depends on the site.}
        SR gives lower perplexity in the MLP, whereas RN is preferred at
        the head and attention input projection. The loss decomposition
        attributes the MLP advantage to reduced drift and the head
        disadvantage to SR fluctuations
        (\cref{fig:component,fig:sublayer}).

  \item \textbf{RN loss compounds more strongly across blocks.}
        For \code{mlp\_c\_proj} at $t=5$--$6$, RN excess loss exceeds
        the sum of single-block losses more than SR excess loss does.
        At $t=6$, SR is approximately additive
        (\cref{fig:cumulative}).

  \item \textbf{Mixed-rounding improves on full RN.}
        Using SR at the attention and MLP output projections and RN
        elsewhere lowers perplexity, even with one fewer significand
        bit at the MLP down-projection
        (\cref{tab:accuracy-matched-precision}).
\end{enumerate}

\paragraph{Global uniform-precision sweep.}
Under uniform precision, RN consistently outperforms SR across the entire
low-to-moderate precision range, with statistically significant differences at
every $t \in [4,11]$ (\cref{fig:global}). At $t=8$, perplexity is $97.68$ under
RN and $139.27\pm3.58$ under SR. Reducing only the head reproduces this gap: the
SR--RN difference in excess loss is $2.03$ nats globally and $2.27$ nats with
only \code{lm\_head} reduced at $t=6$, and $0.35$ against $0.37$ nats at $t=8$.
The RN advantage is therefore consistent with the head's SR fluctuation penalty
(\cref{sec:analysis-empirical-drift}) dominating the whole-network result. This
general performance agrees with the forward-pass preference for RN reported by
\citet{chmiel2023accurate}, establishing RN as the appropriate default for
global precision reduction.

\begin{figure}[t]
  \centering
  \safeincludegraphics{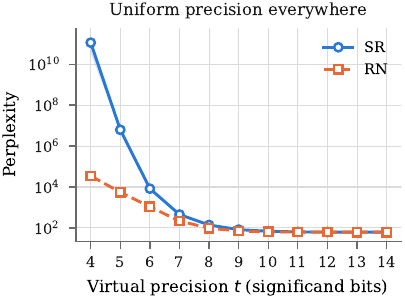}
  \caption{Global sweep at uniform precision across all model operations, showing mean perplexity and SR standard deviation on a logarithmic axis. RN's advantage is statistically significant across $t\in[4,11]$.}
  \label{fig:global}
\end{figure}

\begin{figure}[t]
  \centering
  \safeincludegraphics{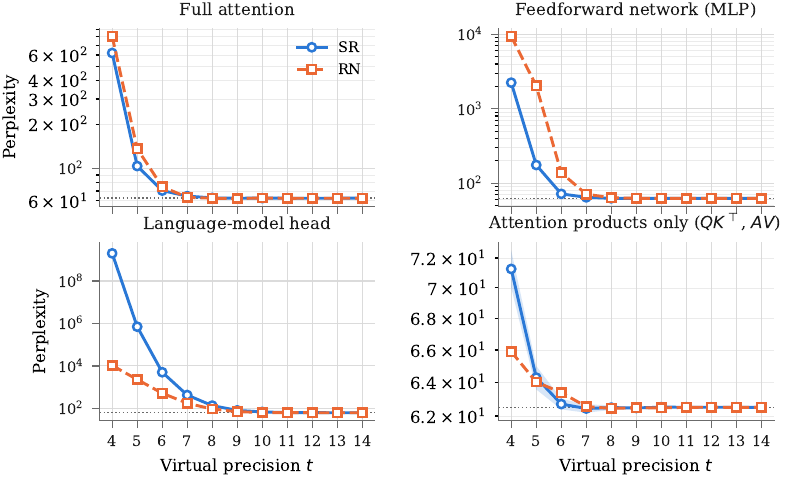}
  \caption{Per-component sweep across precisions $t \in [4, 14]$. Bands show $\pm 1$ sample SD over five SR seeds; dotted line indicates the full-precision reference ($62.51$).}
  \label{fig:component}
\end{figure}

\paragraph{Site performance: MLP versus language-model head.}
In the MLP, SR gives lower perplexity than RN at every $t\le8$, with
statistically significant differences (\cref{fig:component}). At $t=6$,
perplexity is $71.97\pm1.16$ under SR and $137.93$ under RN, a degradation of
$15\%$ against $121\%$ over the reference. At $t=8$, SR is within $0.3\%$ of the
reference ($62.69$) and RN within $2.0\%$ ($63.75$). At $t\le5$ both rules more
than double the reference perplexity (at $t=5$, $175.65$ under SR against
$2029.83$ under RN); the ordering there is statistically significant but
concerns configurations that neither rule makes usable.

At the language-model head the ordering reverses: RN gives lower perplexity at
every $t\in[4,12]$, with statistically significant differences. At $t=8$,
perplexity is $135.56\pm5.22$ under SR and $93.60$ under RN; at $t=6$,
$5054\pm464$ against $519.18$. \Cref{sec:analysis-empirical-drift} decomposes
the head's excess loss at these precisions. Over $t=6,\ldots,9$ the fluctuation
term $V_{\SR}$ accounts for $100\%$ of SR's excess loss, whereas RN,
which has no fluctuation, pays only for its non-uniform drift ($2.12$ against
$4.40$ nats at $t=6$). SR noise injected at the head changes the differences
between logits, which the cross-entropy penalizes; SR noise injected in the MLP
reaches the logits predominantly as a uniform shift, to which the softmax is
invariant.

The \code{attention} module favors SR at $t\le6$ but do not favor any rounding
mode at higher precisions; at $t\ge7$ the two rules differ by less than $2\%$ of
the reference (at $t=7$, $64.60\pm0.49$ under SR against $63.42$ under RN).
Restricted to the score and value products (\code{attn\_qkav}), reduced
precision raises perplexity by at most $14\%$ at $t=4$ ($71.26\pm1.26$ under SR,
$65.90$ under RN) and by at most $2.9\%$ at $t\ge5$. These products have the
shortest reductions in the model ($d_{\rm head}=64$ and at most $T=256$ terms),
which may explain their small sensitivity to either rule.

The \code{attention} module does not decompose into its parts. At $t=4$, the
excess losses of \code{attn\_c\_attn}, \code{attn\_c\_proj}, and
\code{attn\_qkav}, each reduced separately, sum to $1.20$ nats under RN and
$1.66$ nats under SR, which favors RN; reducing the whole module gives $2.56$
and $2.29$ nats, which favors SR. At $t=6$ the sums are equal under both rules
($0.094$ nats), whereas the whole module gives $0.182$ under RN and $0.120$
under SR.

\begin{figure}[t]
  \centering
  \safeincludegraphics{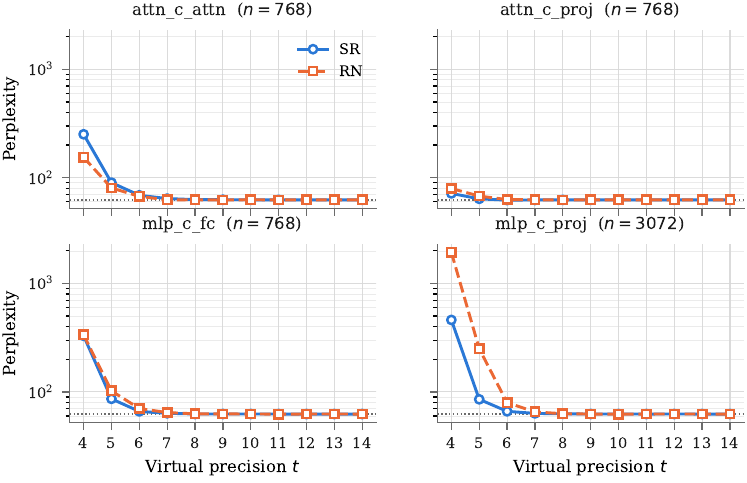}
  \caption{Per-sublayer sweep across the four primary projection matrices, with
    reduction lengths $n$ indicated and perplexity on a shared logarithmic scale.
    The feedforward down-projection (\code{mlp\_c\_proj}, $n=3072$) exhibits the
    largest low-precision SR advantage ($\Delta = -1473.15$ at $t=4$).}
  \label{fig:sublayer}
\end{figure}

\paragraph{Reduction length and sublayer sensitivity.}
\Cref{fig:sublayer} separates the four projections, each reduced in all six
blocks. The MLP down-projection (\code{mlp\_c\_proj}, $n=3072$) has the largest
SR advantage, statistically significant at every $t\le7$: at $t=6$, perplexity
is $66.24\pm0.63$ under SR and $79.64$ under RN, a degradation of $6.0\%$
against $27\%$. Among the three projections with $n=768$, the preferred rule
depends on the site. The attention input projection (\code{attn\_c\_attn})
favors RN wherever the difference is statistically significant ($t=4,5,7$; at
$t=4$, $251.58\pm16.05$ under SR against $154.08$ under RN). The attention
output projection (\code{attn\_c\_proj}) favors SR at $t=4$--$6$, and the MLP
input projection (\code{mlp\_c\_fc}) favors SR at $t=5$--$6$; at $t\ge7$ neither
difference is statistically significant.

The largest SR advantage occurring at the longest reduction is consistent with
\cref{sec:analysis-mlp}: RN rounding errors can align across the $n$ terms of a
reduction, whereas SR errors have zero conditional mean, so RN drift can grow
faster with $n$ than SR fluctuation. The sweep does not isolate $n$, however.
Only one site has $n=3072$, and it also differs from the others in its input,
the GELU output, which is bounded below by approximately $-0.17$. The opposite
preferences of three sites with identical $n=768$ show that factors other than
$n$, such as the input distribution, the weights, and the downstream
operations, also determine the ordering.

\paragraph{SR's advantage across depth.}
The cumulative sweep in \cref{fig:cumulative} reduces \code{mlp\_c\_proj} in
progressively longer block prefixes, from block~$0$ to all six blocks. For every
tested precision, SR yields statistically significantly lower perplexity than RN
at every prefix, except $0$ and $0$--$1$ at $t=6$, where the differences are not
statistically significant.

The advantage of SR becomes more pronounced as more blocks use reduced
precision. At $t=4$, perplexities are initially close ($83.24$ under SR and
$85.13$ under RN for block~$0$), but extending the reduction to all six blocks
increases perplexity by a factor of $5.5$ under SR and $22.7$ under RN. At
$t=6$, the difference becomes statistically significant from prefix $0$--$2$
onward, reaching $66.24\pm0.63$ under SR versus $79.64$ under RN with all six
blocks reduced, as in \cref{fig:sublayer}. These results show that SR limits the
perplexity degradation from reducing the MLP down-projection across multiple
blocks. The growing SR--RN gap is consistent with RN introducing systematic drift
and SR limiting its accumulation across blocks.

This compounding effect illustrates a key structural difference between the
rounding rules. The deterministic drift introduced by RN in one block acts as a
correlated error that systematically shifts activations in subsequent blocks. In
contrast, the noise introduced by SR accumulates without compounding directional
drift across depth. While SR's unbiasedness does not survive non-linear
operations, empirically, the downstream propagated noise remains far less biased
for SR than for RN, validating its theoretical benefits across network depth.

\begin{figure}[t]
  \centering
  \safeincludegraphics{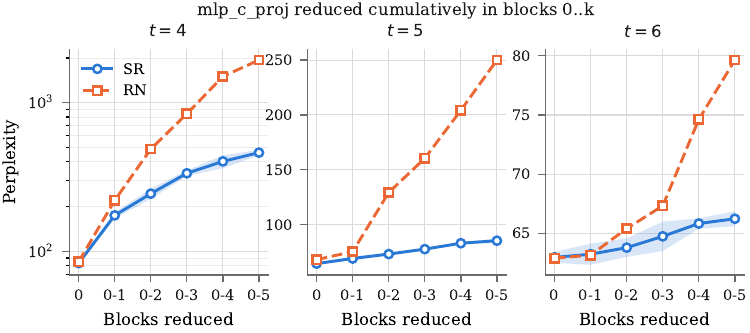}
  \caption{Cumulative propagation across block prefixes $0$ to $0\text{--}k$ for \code{mlp\_c\_proj} at $t \in \{4, 5, 6\}$. Bands show $\pm 1$ sample SD over five SR seeds.}
  \label{fig:cumulative}
\end{figure}

\subsection{Mixed rounding beats full RN at matched precision}
\label{sec:exp-composition}

The sweeps above reduce one site or one module at a time, and per-site effects
do not add (\cref{sec:exp-results}). We therefore test whether the per-site
preferences hold when five sites are reduced together. Each mixed configuration
assigns SR or RN per site and is compared with an all-RN configuration at
identical per-site precisions (\cref{tab:accuracy-matched-precision}).

The assignment follows the sublayer sweeps. SR is used at the two output
projections, where it is preferred in isolation: \code{attn\_c\_proj} at $t=6$
($62.10\pm0.42$ under SR against $62.91$ under RN) and \code{mlp\_c\_proj} at
$k\in\{6,7,8\}$ (statistically significant at $k=6,7$; not statistically
significant at $k=8$). RN is used at the two input projections at $t=8$, where
neither sublayer difference is statistically significant and
\code{attn\_c\_attn} favors RN at every lower precision with a statistically
significant difference. The head runs under RN at $t=11$, where head-only RN is
within $0.8\%$ of the reference ($63.00$). All remaining operations run at
$t=24$ under RN. The all-RN control replaces SR by RN at the two output
projections and changes nothing else.

\begin{table}[t]
  \centering
  \caption{Mixed rounding versus RN at identical per-site significand
    precisions on WikiText-2. Lower perplexity is better.}
  \label{tab:accuracy-matched-precision}
  \footnotesize
  \setlength{\tabcolsep}{5pt}
  \renewcommand{\arraystretch}{1.15}
  \begin{tabular}{@{}l c cc@{}}
    \toprule
    \multicolumn{3}{@{}l}{\textbf{(a) Precision and rounding configurations}}                                \\
    Site                                    & $t$ bits         & Mixed                             \\
    \midrule
    Attention input (\code{attn\_c\_attn})  & $8$              & RN                                \\
    Attention output (\code{attn\_c\_proj}) & $6$              & \textbf{SR}                       \\
    MLP input (\code{mlp\_c\_fc})           & $8$              & RN                                \\
    MLP output (\code{mlp\_c\_proj})        & $k \in \{6,7,8\} $          & \textbf{SR}            \\
    Language-model head (\code{lm\_head})   & $11$             & RN                                \\
    All other operations                    & $24$             & RN                                \\
    \bottomrule
  \end{tabular}\hfill
  \begin{minipage}[c]{0.42\linewidth}
    \footnotesize
    Configurations apply across all six blocks; $\mu\pm\sigma$ over five seeds.
    Negative $\Delta$ favors mixed rounding. CIs are $95\%$ Student-$t$ ($df=4$).
    Relative baseline column shows percentage increase over the full-precision reference ($62.51$) for Mixed (Full RN).
  \end{minipage}

  \medskip
  \begin{tabular}{@{}c c c c c c@{}}
    \toprule
    \multicolumn{6}{@{}l}{\textbf{(b) Perplexity gaps}}                                                                                                          \\
    \addlinespace[2pt]
    \shortstack{$k$}
        & \shortstack[r]{Mixed ($\mu\pm\sigma$) }
        & \shortstack[r]{Full RN}
        & \shortstack[r]{Gap $\Delta$ ($\%$)}
        & \shortstack{$95\%$ CI $\Delta$}
        & \shortstack[r]{Rel. baseline $\Delta$ (Full RN)}                                                                                                       \\
    \midrule
    $8$ & $64.63\pm0.68$                                   & $66.66$ & \SRbetter{$-2.03$ ($-3.04\%$)}   & $[-2.87,-1.18]$   & \RNbetter{$+3.38\%$ $(+6.64\%)$}   \\
    $7$ & $64.89\pm0.98$                                   & $68.89$ & \SRbetter{$-4.00$ ($-5.81\%$)}   & $[-5.22,-2.79]$   & \RNbetter{$+3.80\%$ $(+10.20\%)$}  \\
    $6$ & $69.01\pm2.07$                                   & $96.07$ & \SRbetter{$-27.06$ ($-28.17\%$)} & $[-29.63,-24.49]$ & \RNbetter{$+10.39\%$ $(+53.68\%)$} \\
    \bottomrule
  \end{tabular}
\end{table}

These results confirm that combining mixed rounding with mixed precision
consistently improves overall performance compared to using RN alone. At
identical per-site precisions, the mixed configuration gives lower perplexity
than all-RN from $3.04\%$ to $28.17\%$ for all three values of $k$. The gain
grows as the MLP down-projection loses precision, as in the sublayer sweep.
Across rows, the mixed configuration at $k=6$ ($69.01\pm2.07$) is not
distinguishable from all-RN at $k=7$ ($68.89$), and the mixed configuration at
$k=7$ ($64.89\pm0.98$) is below all-RN at $k=8$ ($66.66$), which lies outside
its $95\%$ interval. In this recipe, replacing RN by SR at the two output
projections compensates for removing one significand bit at the MLP
down-projection.

\subsection{Limitations}
\label{sec:exp-limitations}

PRISM emulates reduced significand precision while retaining the \code{binary32}
exponent range. These results concern significand and accumulator precision
under emulation; they do not model the exponent-range restrictions of narrow
hardware formats such as FP8 or NVFP4. Transfer to native hardware also depends
on its reduction order and arithmetic implementation. Non-target operations in
these sweeps use PRISM's untied RN at $t=24$, rather than native ties-to-even
(RNE); while exact tie events are rare in continuous floating-point activations, their frequency and impact have not been characterized.
Incorporating explicit exponent-range limitations remains future work.

The complete sweeps use one six-block model and one leading text prefix. The
measured site ordering and composition effects need replication across texts,
larger models, and broader tasks. The reduction bounds and loss decomposition
identify relevant quantities, but do not establish a universal rounding
assignment or quantitative portability of the observed gains.

%% file: conclusion.tex
\section{Discussion and conclusion}
\label{sec:conclusion}

The rounding rule, when treated as a site-specific design variable rather than a
network-wide constant, affects perplexity in opposite directions depending on
where it is applied.
Inside the feedforward network, SR outperforms RN by a wide margin at low
precision, reaching an $11.6\times$ perplexity reduction at $t=5$.
Cumulatively, reducing all six MLP down-projections to $t=6$ holds perplexity
degradation to $6.0\%$ over the full-precision baseline under SR, against $27\%$
under RN ($66.24$ against $79.64$ perplexity).
At the language-model head, the ranking reverses and RN yields substantially
lower excess loss.
While \citet{chmiel2023accurate} found that RN is generally preferred in the
forward pass, and contemporary low-precision formats such as NVIDIA's
NVFP4~\citep{nvidia2025nvfp4} employ SR selectively for gradient casts during
training, our results demonstrate that operation-level granularity is equally
consequential at inference: scoping SR specifically to the long reductions of the
MLP down-projections mitigates accumulated drift, while retaining RN at the
output softmax avoids the severe variance penalty that SR incurs there.
These site-specific preferences also enable mixed-rounding recipes: assigning SR
to the attention and MLP down-projections and RN to the remaining sites yields relative
perplexity reductions of $3.04\%$--$28.17\%$ over matched all-RN configurations
across the three tested configurations.

\Cref{sec:analysis} identifies which quantities determine the site-level
trade-off.
An exact decomposition separates the signed shift of the target logit from the
relative KL distortion of competing classes; a second-order expansion further
splits the distortion into a drift-curvature term $Q$ and a variance penalty
$V$.
At the MLP down-projection, RN's deterministic errors accumulate into a
non-uniform logit shift whose curvature cost exceeds SR's variance penalty,
which is small because the propagated SR noise is predominantly a uniform
vocabulary shift that the softmax ignores.
At the language-model head, SR's fluctuation $\xi$ is genuinely non-uniform and is
penalized directly by cross-entropy.
The decomposition establishes that the comparative advantage of
either rounding rule is governed not by raw error magnitude, but by whether the
resulting perturbations propagate as a uniform vocabulary shift or disperse as
non-uniform distortion across logits.

Establishing the broader generality of this trade-off requires evaluating the
decomposition across larger models, varied architectures, and diverse downstream
tasks.
Porting PRISM directly to GPUs (where an early prototype is in development)
and incorporating exponent-range checks into variable-precision frameworks
will be essential steps to scale this analysis to modern billion-parameter
architectures and complete narrow hardware formats.

\paragraph{Reproducibility.}
All evaluation pipelines, experiment scripts, and aggregated data are publicly available at \githubrepo.
\YC{TODO: Provide details on how to reproduce the experiments, including code,
  datasets, and hyperparameters. Github links, docker images and Zenodo tags.}

%% file: appendix.tex
\section{Proof of Exactness of the Rounding Decision}
\label{app:proof-exact-sign}

This appendix provides the complete proof of \cref{thm:exact-sign} from
\cref{sec:proof}, establishing that the floating-point evaluation of the
decision boundary in \cref{alg:vpsr} preserves the exact mathematical sign under
Hypotheses~1--3.

\paragraph{Sign normalization.}
Define the sign-normalized variables $\rho' = \rho \cdot s_{\rm dir}$, $\pi' = \pi \cdot s_{\rm dir}$, and $\tau' = \tau \cdot s_{\rm dir}$. Since $s_{\rm dir} \in \{-1, 1\}$, the theorem's conclusion is equivalent to
\[
(\rho' - \pi') + \tau' \ge 0 \iff \fl(\fl(\rho' - \pi') + \tau') \ge 0 .
\]
When $\rho \ne 0$, $s_{\rm dir} = \mathrm{sgn}(\rho)$, so $\rho' = |\rho| \ge 0$ and $\pi' = |\pi| \ge 0$ (since $\rho$ and $\pi$ share the same sign by Hypothesis~3). When $\rho = 0$, $s_{\rm dir} = \mathrm{sgn}(\tau)$, so $\rho' = 0$ and $\tau' = |\tau| \ge 0$.

\begin{lemma}[No collapse to zero]
\label[lemma]{lem:no-collapse}
Under Hypotheses~1--3 of \cref{thm:exact-sign}, if
$\rho' \ge \ulp_{\rm hw}$ and $(\rho' - \pi') + \tau' < 0$, then
$\fl(\fl(\rho' - \pi') + \tau') \ne 0$.
\end{lemma}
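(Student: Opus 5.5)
We argue by contradiction. Assume $\rho' \ge \ulp_{\rm hw}$ and $(\rho'-\pi')+\tau' < 0$, yet $\fl(d+\tau') = 0$, where $d := \fl(\rho'-\pi')$. The plan is to show that this collapse forces $d=-\tau'$. That in turn pins $\pi'$ into the Sterbenz range $[\rho'/2,\,2\rho']$, so the subtraction is exact and the exact sum is $0$, which is a contradiction. By the sign normalization above, with $\rho\neq0$ we have $\rho' = |\rho|$ and $\pi' = |\pi| \ge 0$. Hypotheses~1 and~2 give $|\tau'| \le \ulp_{\rm hw}/2 \le \rho'/2$.

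\textbf{Step 1: the collapse is an exact cancellation.} For floating-point $a,b$ under IEEE~754 round-to-nearest with gradual underflow, $\fl(a+b)=0$ holds iff $a+b=0$ exactly. This is because a nonzero exact sum of two floats is a nonzero multiple of the smallest subnormal, and so it never rounds to zero. Hence $d = -\tau'$, and so $|d| \le \ulp_{\rm hw}/2 \le \rho'/2$.

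\textbf{Step 2: localize $\rho'-\pi'$ by monotonicity.} Note that $\rho'/2$ and $-\rho'$ are floating-point numbers, since halving is exact given our no-underflow assumption.
\begin{itemize}
\item Suppose first $\rho'-\pi' < -\rho'$. Monotonicity of rounding gives $d \le -\rho'$, so $|d| \ge \rho' \ge \ulp_{\rm hw} > \ulp_{\rm hw}/2$. This contradicts Step~1.
\item Suppose instead $\rho'-\pi' > \rho'/2$. Monotonicity gives $d \ge \rho'/2$. Combined with Step~1, this forces $d = \rho'/2 = \ulp_{\rm hw}/2$, hence $\tau' = -\rho'/2$. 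Then the exact sum equals $(\rho'-\pi') - \rho'/2 > 0$, contradicting the assumption that it is negative.
\end{itemize}
Therefore $-\rho' \le \rho'-\pi' \le \rho'/2$, that is, $\rho'/2 \le \pi' \le 2\rho'$.

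\textbf{Step 3: Sterbenz and conclusion.} Both $\rho'$ and $\pi'$ are nonnegative floats with $\pi'/2 \le \rho' \le 2\pi'$. By Sterbenz's Lemma~\citep{sterbenz1974floating}, $\fl(\rho'-\pi') = \rho'-\pi'$ exactly. Then $d = \rho'-\pi'$, and Step~1 gives $(\rho'-\pi')+\tau' = d+\tau' = 0$. This contradicts strict negativity, so $\fl(\fl(\rho'-\pi')+\tau')\neq 0$.

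The delicate point is the boundary case in Step~2. There the inequality $|d|\le \rho'/2$ can hold with equality when $\rho' = \ulp_{\rm hw}$ and $|\tau'| = \ulp_{\rm hw}/2$, so monotonicity alone does not exclude $\pi' < \rho'/2$. This is why the sign hypothesis on the exact sum has to be used inside that case, rather than only at the end. The other point that needs care is justifying Step~1, namely that round-to-nearest never flushes a nonzero sum of two floats to zero; this relies on the assumption of gradual underflow.
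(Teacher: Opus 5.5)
Your proof is correct and takes the same route as the paper: the collapse forces $\fl(\rho'-\pi')=-\tau'$, this places $\pi'$ in the Sterbenz range $[\rho'/2,2\rho']$, and exactness of the subtraction then contradicts strict negativity of the exact sum. The sub-steps differ slightly: the paper gets $\pi'>\rho'/2$ directly from $\pi'>\rho'+\tau'\ge\rho'/2$, which avoids your case split and your need for $\rho'/2$ to be representable (and hence the no-underflow assumption you added), and it gets $\pi'\le 2\rho'$ from a half-ulp bound on the subtraction error plus the reverse triangle inequality, rather than from your simpler monotonicity comparison with the representable $-\rho'$.
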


\begin{proof}
Assume for contradiction that the evaluated result collapses to zero: $\fl(\fl(\rho' - \pi') + \tau') = 0$.
An IEEE~754 addition rounds to $0$ if and only if the exact sum is $0$.
Thus, $\fl(\rho' - \pi') + \tau' = 0 \implies \fl(\rho' - \pi') = -\tau'$.
We now establish the conditions for Sterbenz's Lemma~\citep{sterbenz1974floating} ($\frac{\rho'}{2} \le \pi' \le 2\rho'$):

\begin{itemize}
\item \textbf{Lower bound} ($\pi' \ge \frac{\rho'}{2}$): 
Our starting assumption $(\rho' - \pi') + \tau' < 0$ rearranges directly to $\pi' > \rho' + \tau'$. By Hypotheses~1 and~2, $\tau' \ge -\ulp_{\rm hw}/2 \ge -\rho'/2$. Therefore, $\pi' > \rho' - \rho'/2 = \frac{\rho'}{2}$.

\item \textbf{Upper bound} ($\pi' \le 2\rho'$): 
The absolute rounding error of the subtraction is bounded by $\frac{1}{2}\ulp_{\rm hw}(-\tau')$. By Hypothesis~1, $|\tau'| \le \ulp_{\rm hw}/2$, and $\ulp_{\rm hw}(\cdot)$ is nondecreasing in magnitude, so $\frac{1}{2}\ulp_{\rm hw}(-\tau') \le \frac{1}{2}\ulp_{\rm hw}(\ulp_{\rm hw}/2) \le \ulp_{\rm hw}/2$. Thus, $|\fl(\rho' - \pi') - (\rho' - \pi')| = |-\tau' - (\rho' - \pi')| = |(\rho' - \pi') + \tau'| \le \ulp_{\rm hw}/2$. Applying the reverse triangle inequality yields $|\rho' - \pi'| - |\tau'| \le \ulp_{\rm hw}/2$. Since $|\tau'| \le \ulp_{\rm hw}/2$ and $\ulp_{\rm hw} \le \rho'$, we obtain $|\rho' - \pi'| \le \ulp_{\rm hw} \le \rho'$. This bound implies $\rho' - \pi' \ge -\rho'$, which gives $\pi' \le 2\rho'$.
\end{itemize}

Sterbenz's Lemma guarantees the subtraction is exact: $\fl(\rho' - \pi') = \rho' - \pi'$.
Substituting this exactness back into our derivation yields $(\rho' - \pi') + \tau' = 0$, which contradicts our starting assumption that $(\rho' - \pi') + \tau' < 0$. Therefore, the collapse to $0$ is impossible.
\end{proof}

\begin{proof}[Proof of \cref{thm:exact-sign}.]\par\nopagebreak
\noindent\textbf{Case $\rho = 0$.}\enspace When $\rho' = 0$, the subtraction $\fl(\rho' - \pi') = \fl(-\pi') = -\pi'$ is exact, since negation is exact in IEEE~754. The evaluation reduces to a single rounding $\fl(-\pi' + \tau')$.
For the forward direction ($\implies$), assume $(\rho' - \pi') + \tau' \ge 0$; then $\fl(-\pi' + \tau') \ge 0$ by monotonicity.
For the backward direction ($\impliedby$), assume $\fl(-\pi' + \tau') \ge 0$ but, for contradiction, $(\rho' - \pi') + \tau' < 0$. Monotonicity forces $\fl(-\pi' + \tau') \le 0$, hence the result is exactly~$0$. But an IEEE~754 addition of representable inputs rounds to~$0$ only when the exact sum is~$0$, a contradiction.

\smallskip
\noindent\textbf{Case $\rho \ne 0$.}\enspace Hypothesis~2 gives $\rho' = |\rho| \ge \ulp_{\rm hw}$.
For the forward direction ($\implies$), assume $(\rho' - \pi') + \tau' \ge 0$, i.e., $\rho' - \pi' \ge -\tau'$. By monotonicity, $\fl(\rho' - \pi') \ge \fl(-\tau') = -\tau'$, so $\fl(\rho' - \pi') + \tau' \ge 0$. Applying monotonicity once more, $\fl(\fl(\rho' - \pi') + \tau') \ge 0$.
For the backward direction ($\impliedby$), assume $\fl(\fl(\rho' - \pi') + \tau') \ge 0$ but, for contradiction, $(\rho' - \pi') + \tau' < 0$. By the same monotonicity argument with reversed inequalities, $\fl(\fl(\rho' - \pi') + \tau') \le 0$. Combined with the assumption, the floating-point result is exactly $0$. But \cref{lem:no-collapse} shows this collapse is impossible, a contradiction.\hfill\BlackBox
\let\endproof\relax
\end{proof}

%% file: appendix_drift.tex
\FloatBarrier
\section{Drift and variance measurements}
\label{app:drift-measurements}

\Cref{tab:decomposition} provides the numerical decomposition behind
\cref{sec:analysis-empirical-drift}. All estimates condition on the same
$N=1020$ scored positions.

\paragraph{Logit measurement protocol.} The evaluation uses the same DistilGPT-2
checkpoint and $N=1020$ scored WikiText-2 positions as the perplexity sweeps
(\cref{sec:exp-setup}), with full-precision reference loss $\mathcal L_0=4.1354$ nats
($\operatorname{PPL}_0=62.51$).
Unlike the RN-background sweeps in \cref{sec:exp-scoping}, instrumented operations
outside the lowered site use the matching rounding rule at $t=24$ (SR background for SR;
RN for RN). The SR perturbations therefore include full-precision background rounding;
an eight-seed SR control with no lowered site yields mean excess loss of
$(6.66\pm14.48)\times10^{-8}$ nats, verifying that the background contribution is
negligible. As noted in \cref{sec:analysis-empirical-drift}, the decomposition estimates condition
on this text using $S=32$ SR seeds and one deterministic RN run per precision ($t=4,\ldots,10$).

\paragraph{Finite-seed estimators.}
With $\hat b=S^{-1}\sum_s\Delta_s$,
$A=\tfrac{1}{2S}\sum_s\Var_p(\Delta_s)$, and
$B=\tfrac12\Var_p(\hat b)$, the unbiased quadratic estimators are
$\hat Q = \tfrac{SB-A}{S-1} = B - \tfrac{\hat V}{S}$ and
$\hat V = \tfrac{S(A-B)}{S-1}$.
We use $\hat D=g^\top\hat b$ and define $\hat R$ by subtracting these three
terms from the measured mean excess loss; agreement of their sum is therefore
not an independent validation. Standard errors are estimated by leaving out each seed in turn and
recomputing the estimator. Although $Q\ge0$, its
unbiased estimate can be negative: at the head and $t=8$, $\hat Q$ is
$-0.0017\pm0.0013$ at $S=8$ and $-0.0003\pm0.0004$ at $S=32$. Both estimates lie within two standard errors of zero.
We compute the variance penalty without forming the full covariance matrix via
$2V=\operatorname{tr}(H\Sigma)=\sum_jp_j\Sigma_{jj}-\Var_s(p^\top\Delta)$,
which requires only the coordinate variances and the variance of $p^\top\Delta$.

\newcommand{\hdneg}{\phantom{-}}
\newcommand{\hddig}{\phantom{0}}

\begin{table}[htbp]
  \centering\small
  \caption{Loss decomposition terms (\cref{eq:head-full-comparison}) in nats, averaged over $N=1020$ positions. SR reports mean $\pm$ 1 SE ($S=32$ seeds); RN is deterministic ($V_{\RN}=0$).}
  \label{tab:decomposition}
  {\footnotesize
    \setlength{\tabcolsep}{3pt}
    \begin{tabular}{ccrrrrrr}
      \toprule
                                                                                  &                         & \multicolumn{1}{c}{$t$} & \multicolumn{1}{c}{$D$}                              & \multicolumn{1}{c}{$Q$}
                                                                                  & \multicolumn{1}{c}{$V$} & \multicolumn{1}{c}{$R$} & \multicolumn{1}{c}{$\E[\delta\mathcal L]$}                                                                                                                                                                                                                                             \\
      \midrule
                                                                                  &                         & \cellcolor{gray!10}$4$  & \cellcolor{gray!10}$\hdneg0.049\hddig\pm0.043\hddig$ & \cellcolor{gray!10}${-}0.002\hddig\pm0.011\hddig$    & \cellcolor{gray!10}$23.204\hddig\pm0.047\hddig$      & \cellcolor{gray!10}${-}5.860\hddig\pm0.046\hddig$    & \cellcolor{gray!10}$\hdneg17.392\hddig\pm0.045\hddig$      \\
                                                                                  &                         & \cellcolor{gray!10}$5$  & \cellcolor{gray!10}${-}0.009\hddig\pm0.021\hddig$    & \cellcolor{gray!10}${-}0.001\hddig\pm0.006\hddig$    & \cellcolor{gray!10}$10.446\hddig\pm0.021\hddig$      & \cellcolor{gray!10}${-}1.141\hddig\pm0.022\hddig$    & \cellcolor{gray!10}$\hdneg\hddig9.295\hddig\pm0.025\hddig$ \\
                                                                                  &                         & \cellcolor{gray!10}$6$  & \cellcolor{gray!10}$\hdneg0.015\hddig\pm0.016\hddig$ & \cellcolor{gray!10}$\hdneg0.002\hddig\pm0.002\hddig$ & \cellcolor{gray!10}$\hddig4.606\hddig\pm0.009\hddig$ & \cellcolor{gray!10}${-}0.223\hddig\pm0.009\hddig$    & \cellcolor{gray!10}$\hdneg\hddig4.400\hddig\pm0.016\hddig$ \\
                                                                                  &                         & \cellcolor{gray!10}$7$  & \cellcolor{gray!10}$\hdneg0.016\hddig\pm0.010\hddig$ & \cellcolor{gray!10}$\hdneg0.001\hddig\pm0.001\hddig$ & \cellcolor{gray!10}$\hddig1.951\hddig\pm0.003\hddig$ & \cellcolor{gray!10}${-}0.062\hddig\pm0.003\hddig$    & \cellcolor{gray!10}$\hdneg\hddig1.907\hddig\pm0.011\hddig$ \\
                                                                                  &                         & \cellcolor{gray!10}$8$  & \cellcolor{gray!10}${-}0.008\hddig\pm0.008\hddig$    & \cellcolor{gray!10}${-}0.0003\pm0.0004$              & \cellcolor{gray!10}$\hddig0.766\hddig\pm0.001\hddig$ & \cellcolor{gray!10}${-}0.017\hddig\pm0.001\hddig$    & \cellcolor{gray!10}$\hdneg\hddig0.740\hddig\pm0.007\hddig$ \\
                                                                                  &                         & \cellcolor{gray!10}$9$  & \cellcolor{gray!10}$\hdneg0.003\hddig\pm0.003\hddig$ & \cellcolor{gray!10}$\hdneg0.000\hddig\pm0.000\hddig$ & \cellcolor{gray!10}$\hddig0.270\hddig\pm0.001\hddig$ & \cellcolor{gray!10}${-}0.003\hddig\pm0.000\hddig$    & \cellcolor{gray!10}$\hdneg\hddig0.271\hddig\pm0.003\hddig$ \\
                                                                                  & \multirow{-7}{*}{SR}    & \cellcolor{gray!10}$10$ & \cellcolor{gray!10}$\hdneg0.003\hddig\pm0.002\hddig$ & \cellcolor{gray!10}$\hdneg0.000\hddig\pm0.000\hddig$ & \cellcolor{gray!10}$\hddig0.084\hddig\pm0.000\hddig$ & \cellcolor{gray!10}$\hdneg0.000\hddig\pm0.000\hddig$ & \cellcolor{gray!10}$\hdneg\hddig0.087\hddig\pm0.002\hddig$ \\
      \cmidrule(l){2-8}
                                                                                  &                         & $4$                     & ${-}0.016\hddig\phantom{{}\pm0.0000}$                & $\hdneg3.166\hddig\phantom{{}\pm0.0000}$             & $\hddig0\phantom{.0000}\phantom{{}\pm0.0000}$        & $\hdneg1.974\hddig\phantom{{}\pm0.0000}$             & $\hdneg\hddig5.124\hddig\phantom{{}\pm0.0000}$             \\
                                                                                  &                         & $5$                     & $\hdneg0.104\hddig\phantom{{}\pm0.0000}$             & $\hdneg2.402\hddig\phantom{{}\pm0.0000}$             & $\hddig0\phantom{.0000}\phantom{{}\pm0.0000}$        & $\hdneg1.100\hddig\phantom{{}\pm0.0000}$             & $\hdneg\hddig3.605\hddig\phantom{{}\pm0.0000}$             \\
                                                                                  &                         & $6$                     & $\hdneg0.041\hddig\phantom{{}\pm0.0000}$             & $\hdneg1.576\hddig\phantom{{}\pm0.0000}$             & $\hddig0\phantom{.0000}\phantom{{}\pm0.0000}$        & $\hdneg0.500\hddig\phantom{{}\pm0.0000}$             & $\hdneg\hddig2.117\hddig\phantom{{}\pm0.0000}$             \\
                                                                                  &                         & $7$                     & $\hdneg0.030\hddig\phantom{{}\pm0.0000}$             & $\hdneg0.830\hddig\phantom{{}\pm0.0000}$             & $\hddig0\phantom{.0000}\phantom{{}\pm0.0000}$        & $\hdneg0.168\hddig\phantom{{}\pm0.0000}$             & $\hdneg\hddig1.028\hddig\phantom{{}\pm0.0000}$             \\
                                                                                  &                         & $8$                     & $\hdneg0.019\hddig\phantom{{}\pm0.0000}$             & $\hdneg0.346\hddig\phantom{{}\pm0.0000}$             & $\hddig0\phantom{.0000}\phantom{{}\pm0.0000}$        & $\hdneg0.038\hddig\phantom{{}\pm0.0000}$             & $\hdneg\hddig0.404\hddig\phantom{{}\pm0.0000}$             \\
                                                                                  &                         & $9$                     & ${-}0.012\hddig\phantom{{}\pm0.0000}$                & $\hdneg0.124\hddig\phantom{{}\pm0.0000}$             & $\hddig0\phantom{.0000}\phantom{{}\pm0.0000}$        & $\hdneg0.007\hddig\phantom{{}\pm0.0000}$             & $\hdneg\hddig0.120\hddig\phantom{{}\pm0.0000}$             \\
      \multirow{-14}{*}{\rotatebox[origin=c]{90}{\code{lm\_head} ($n=768$)}}      & \multirow{-7}{*}{RN}    & $10$                    & ${-}0.014\hddig\phantom{{}\pm0.0000}$                & $\hdneg0.039\hddig\phantom{{}\pm0.0000}$             & $\hddig0\phantom{.0000}\phantom{{}\pm0.0000}$        & $\hdneg0.000\hddig\phantom{{}\pm0.0000}$             & $\hdneg\hddig0.025\hddig\phantom{{}\pm0.0000}$             \\
      \midrule
                                                                                  &                         & \cellcolor{gray!10}$4$  & \cellcolor{gray!10}${-}0.2553\pm0.0064$              & \cellcolor{gray!10}$\hdneg0.9690\pm0.0042$           & \cellcolor{gray!10}$\hddig0.4566\pm0.0020$           & \cellcolor{gray!10}$\hdneg0.8280\pm0.0062$           & \cellcolor{gray!10}$\hdneg\hddig1.9983\pm0.0085$           \\
                                                                                  &                         & \cellcolor{gray!10}$5$  & \cellcolor{gray!10}${-}0.1300\pm0.0047$              & \cellcolor{gray!10}$\hdneg0.1645\pm0.0022$           & \cellcolor{gray!10}$\hddig0.2245\pm0.0012$           & \cellcolor{gray!10}$\hdneg0.0680\pm0.0024$           & \cellcolor{gray!10}$\hdneg\hddig0.3270\pm0.0054$           \\
                                                                                  &                         & \cellcolor{gray!10}$6$  & \cellcolor{gray!10}${-}0.0532\pm0.0026$              & \cellcolor{gray!10}$\hdneg0.0165\pm0.0005$           & \cellcolor{gray!10}$\hddig0.0828\pm0.0005$           & \cellcolor{gray!10}$\hdneg0.0040\pm0.0004$           & \cellcolor{gray!10}$\hdneg\hddig0.0501\pm0.0027$           \\
                                                                                  &                         & \cellcolor{gray!10}$7$  & \cellcolor{gray!10}${-}0.0169\pm0.0019$              & \cellcolor{gray!10}$\hdneg0.0014\pm0.0001$           & \cellcolor{gray!10}$\hddig0.0252\pm0.0002$           & \cellcolor{gray!10}$\hdneg0.0006\pm0.0001$           & \cellcolor{gray!10}$\hdneg\hddig0.0104\pm0.0018$           \\
                                                                                  &                         & \cellcolor{gray!10}$8$  & \cellcolor{gray!10}${-}0.0043\pm0.0008$              & \cellcolor{gray!10}$\hdneg0.0001\pm0.0000$           & \cellcolor{gray!10}$\hddig0.0067\pm0.0001$           & \cellcolor{gray!10}$\hdneg0.0001\pm0.0000$           & \cellcolor{gray!10}$\hdneg\hddig0.0026\pm0.0007$           \\
                                                                                  &                         & \cellcolor{gray!10}$9$  & \cellcolor{gray!10}${-}0.0013\pm0.0004$              & \cellcolor{gray!10}$\hdneg0.0000\pm0.0000$           & \cellcolor{gray!10}$\hddig0.0017\pm0.0000$           & \cellcolor{gray!10}$\hdneg0.0000\pm0.0000$           & \cellcolor{gray!10}$\hdneg\hddig0.0005\pm0.0004$           \\
                                                                                  & \multirow{-7}{*}{SR}    & \cellcolor{gray!10}$10$ & \cellcolor{gray!10}${-}0.0004\pm0.0002$              & \cellcolor{gray!10}$\hdneg0.0000\pm0.0000$           & \cellcolor{gray!10}$\hddig0.0004\pm0.0000$           & \cellcolor{gray!10}$\hdneg0.0000\pm0.0000$           & \cellcolor{gray!10}$\hdneg\hddig0.0001\pm0.0002$           \\
      \cmidrule(l){2-8}
                                                                                  &                         & $4$                     & ${-}0.1676\phantom{{}\pm0.0000}$                     & $\hdneg2.4204\phantom{{}\pm0.0000}$                  & $\hddig0\phantom{.0000}\phantom{{}\pm0.0000}$        & $\hdneg1.1792\phantom{{}\pm0.0000}$                  & $\hdneg\hddig3.4321\phantom{{}\pm0.0000}$                  \\
                                                                                  &                         & $5$                     & ${-}0.1250\phantom{{}\pm0.0000}$                     & $\hdneg1.0504\phantom{{}\pm0.0000}$                  & $\hddig0\phantom{.0000}\phantom{{}\pm0.0000}$        & $\hdneg0.4607\phantom{{}\pm0.0000}$                  & $\hdneg\hddig1.3861\phantom{{}\pm0.0000}$                  \\
                                                                                  &                         & $6$                     & ${-}0.1148\phantom{{}\pm0.0000}$                     & $\hdneg0.2867\phantom{{}\pm0.0000}$                  & $\hddig0\phantom{.0000}\phantom{{}\pm0.0000}$        & $\hdneg0.0702\phantom{{}\pm0.0000}$                  & $\hdneg\hddig0.2421\phantom{{}\pm0.0000}$                  \\
                                                                                  &                         & $7$                     & ${-}0.0288\phantom{{}\pm0.0000}$                     & $\hdneg0.0686\phantom{{}\pm0.0000}$                  & $\hddig0\phantom{.0000}\phantom{{}\pm0.0000}$        & $\hdneg0.0055\phantom{{}\pm0.0000}$                  & $\hdneg\hddig0.0453\phantom{{}\pm0.0000}$                  \\
                                                                                  &                         & $8$                     & ${-}0.0101\phantom{{}\pm0.0000}$                     & $\hdneg0.0166\phantom{{}\pm0.0000}$                  & $\hddig0\phantom{.0000}\phantom{{}\pm0.0000}$        & $\hdneg0.0006\phantom{{}\pm0.0000}$                  & $\hdneg\hddig0.0071\phantom{{}\pm0.0000}$                  \\
                                                                                  &                         & $9$                     & ${-}0.0043\phantom{{}\pm0.0000}$                     & $\hdneg0.0040\phantom{{}\pm0.0000}$                  & $\hddig0\phantom{.0000}\phantom{{}\pm0.0000}$        & $\hdneg0.0000\phantom{{}\pm0.0000}$                  & $\hddig{-}0.0002\phantom{{}\pm0.0000}$                     \\
      \multirow{-14}{*}{\rotatebox[origin=c]{90}{\code{mlp\_c\_proj} ($n=3072$)}} & \multirow{-7}{*}{RN}    & $10$                    & ${-}0.0020\phantom{{}\pm0.0000}$                     & $\hdneg0.0010\phantom{{}\pm0.0000}$                  & $\hddig0\phantom{.0000}\phantom{{}\pm0.0000}$        & $\hdneg0.0000\phantom{{}\pm0.0000}$                  & $\hddig{-}0.0009\phantom{{}\pm0.0000}$                     \\
      \bottomrule
    \end{tabular}}
\end{table}
\FloatBarrier

%% file: main.bbl
\begin{thebibliography}{48}
\providecommand{\natexlab}[1]{#1}
\providecommand{\url}[1]{\texttt{#1}}
\expandafter\ifx\csname urlstyle\endcsname\relax
  \providecommand{\doi}[1]{doi: #1}\else
  \providecommand{\doi}{doi: \begingroup \urlstyle{rm}\Url}\fi

\bibitem[Abecassis et~al.(2025)Abecassis, Agrusa, Ahn, Alben,
  et~al.]{nvidia2025nvfp4}
F.~Abecassis, A.~Agrusa, D.~Ahn, J.~Alben, et~al.
\newblock Pretraining large language models with {NVFP4}.
\newblock \emph{arXiv preprint arXiv:2509.25149}, 2025.

\bibitem[{Amazon Web Services}(2023)]{aws2023neuroncore}
{Amazon Web Services}.
\newblock {NeuronCore-v2} architecture.
\newblock Neuron Documentation,
  \url{https://awsdocs-neuron.readthedocs-hosted.com}, 2023.

\bibitem[Ashkboos et~al.(2024)Ashkboos, Mohtashami, Croci, Li, Cameron, Jaggi,
  Alistarh, Hoefler, and Hensman]{ashkboos2024quarot}
S.~Ashkboos, A.~Mohtashami, M.~L. Croci, B.~Li, P.~Cameron, M.~Jaggi,
  D.~Alistarh, T.~Hoefler, and J.~Hensman.
\newblock {QuaRot}: Outlier-free 4-bit inference in rotated {LLMs}.
\newblock In \emph{Advances in Neural Information Processing Systems},
  volume~37, pages 100213--100240, 2024.
\newblock \doi{10.52202/079017-3180}.

\bibitem[Ben~Ali et~al.(2024)Ben~Ali, Filip, and
  Sentieys]{benali2024stochastic}
S.~Ben~Ali, S.-I. Filip, and O.~Sentieys.
\newblock A stochastic rounding-enabled low-precision floating-point {MAC} for
  {DNN} training.
\newblock In \emph{2024 Design, Automation \& Test in Europe Conference \&
  Exhibition (DATE)}, pages 1--6. IEEE, 2024.
\newblock \doi{10.23919/DATE58400.2024.10546735}.

\bibitem[Beuzeville et~al.(2026)Beuzeville, Buttari, Gratton, and
  Mary]{beuzeville2025deterministic}
T.~Beuzeville, A.~Buttari, S.~Gratton, and T.~Mary.
\newblock Deterministic and probabilistic rounding error analysis of neural
  networks in floating-point arithmetic.
\newblock \emph{IMA Journal of Numerical Analysis}, 2026.
\newblock \doi{10.1093/imanum/draf130}.

\bibitem[Blanchard et~al.(2021)Blanchard, Higham, and
  Higham]{blanchard2021accurately}
P.~Blanchard, D.~J. Higham, and N.~J. Higham.
\newblock Accurately computing the log-sum-exp and softmax functions.
\newblock \emph{IMA Journal of Numerical Analysis}, 41\penalty0 (4):\penalty0
  2311--2330, 2021.
\newblock \doi{10.1093/imanum/draa038}.

\bibitem[Boldo and Muller(2011)]{boldo2011exact}
S.~Boldo and J.-M. Muller.
\newblock Exact and approximated error of the {FMA}.
\newblock \emph{IEEE Transactions on Computers}, 60\penalty0 (2):\penalty0
  157--164, 2011.
\newblock \doi{10.1109/TC.2010.139}.

\bibitem[Chatelain(2026)]{prism}
Y.~Chatelain.
\newblock {PRISM}: Probabilistic rounding with instruction set management.
\newblock \url{https://github.com/verificarlo/prism}, 2026.

\bibitem[Chmiel et~al.(2023)Chmiel, Banner, Hoffer, Ben-Yaacov, and
  Soudry]{chmiel2023accurate}
B.~Chmiel, R.~Banner, E.~Hoffer, H.~Ben-Yaacov, and D.~Soudry.
\newblock Accurate neural training with 4-bit matrix multiplications at
  standard formats.
\newblock In \emph{International Conference on Learning Representations}, 2023.

\bibitem[Connolly et~al.(2021)Connolly, Higham, and
  Mary]{connolly2021stochastic}
M.~P. Connolly, N.~J. Higham, and T.~Mary.
\newblock Stochastic rounding and its probabilistic backward error analysis.
\newblock \emph{SIAM Journal on Scientific Computing}, 43\penalty0
  (1):\penalty0 A566--A585, 2021.
\newblock \doi{10.1137/20M1334796}.

\bibitem[Croci et~al.(2022)Croci, Fasi, Higham, Mary, and
  Mikaitis]{croci2022stochastic}
M.~Croci, M.~Fasi, N.~J. Higham, T.~Mary, and M.~Mikaitis.
\newblock Stochastic rounding: implementation, error analysis and applications.
\newblock \emph{Royal Society Open Science}, 9\penalty0 (3):\penalty0 211631,
  2022.
\newblock \doi{10.1098/rsos.211631}.

\bibitem[de~Oliveira~Castro et~al.(2025)de~Oliveira~Castro, El~Arar, Petit, and
  Sohier]{de2025error}
P.~de~Oliveira~Castro, E.-M. El~Arar, E.~Petit, and D.~Sohier.
\newblock Error analysis of sum-product algorithms under stochastic rounding.
\newblock \emph{SIAM Journal on Scientific Computing}, 47\penalty0
  (6):\penalty0 B1481--B1502, 2025.
\newblock \doi{10.1137/24M1710966}.

\bibitem[Dekker(1971)]{dekker1971float}
T.~J. Dekker.
\newblock A floating-point technique for extending the available precision.
\newblock \emph{Numerische Mathematik}, 18\penalty0 (3):\penalty0 224--242,
  1971.
\newblock \doi{10.1007/BF01397083}.

\bibitem[Denis et~al.(2016)Denis, de~Oliveira~Castro, and
  Petit]{denis2016verificarlo}
C.~Denis, P.~de~Oliveira~Castro, and E.~Petit.
\newblock Verificarlo: checking floating point accuracy through {Monte Carlo}
  arithmetic.
\newblock In \emph{2016 IEEE 23rd Symposium on Computer Arithmetic (ARITH)},
  pages 55--62, 2016.
\newblock \doi{10.1109/ARITH.2016.31}.

\bibitem[Dettmers et~al.(2022)Dettmers, Lewis, Belkada, and
  Zettlemoyer]{3600270.3602468}
T.~Dettmers, M.~Lewis, Y.~Belkada, and L.~Zettlemoyer.
\newblock {LLM.int8()}: 8-bit matrix multiplication for transformers at scale.
\newblock In \emph{Proceedings of the 36th International Conference on Neural
  Information Processing Systems}, NIPS '22, pages 30318--30332, Red Hook, NY,
  USA, 2022. Curran Associates Inc.
\newblock ISBN 9781713871088.
\newblock \doi{10.52202/068431-2198}.

\bibitem[El~Arar et~al.(2023)El~Arar, Sohier, de~Oliveira~Castro, and
  Petit]{elarar2022error}
E.-M. El~Arar, D.~Sohier, P.~de~Oliveira~Castro, and E.~Petit.
\newblock Stochastic rounding variance and probabilistic bounds: A new
  approach.
\newblock \emph{SIAM Journal on Scientific Computing}, 45\penalty0
  (5):\penalty0 C255--C275, 2023.
\newblock \doi{10.1137/22M1510819}.

\bibitem[El~Arar et~al.(2024)El~Arar, Sohier, de~Oliveira~Castro, and
  Petit]{elarar2024bounds}
E.-M. El~Arar, D.~Sohier, P.~de~Oliveira~Castro, and E.~Petit.
\newblock Bounds on non-linear errors for variance computation with stochastic
  rounding.
\newblock \emph{SIAM Journal on Scientific Computing}, 46\penalty0
  (5):\penalty0 B579--B599, 2024.
\newblock \doi{10.1137/23M1563001}.

\bibitem[Fasi and Mikaitis(2021)]{fasi2021algorithms}
M.~Fasi and M.~Mikaitis.
\newblock Algorithms for stochastically rounded elementary arithmetic
  operations in {IEEE} 754 floating-point arithmetic.
\newblock \emph{IEEE Transactions on Emerging Topics in Computing}, 9\penalty0
  (3):\penalty0 1451--1466, 2021.
\newblock \doi{10.1109/TETC.2021.3069165}.

\bibitem[F{\'e}votte and Lathuili{\`e}re(2016)]{fevotte2016verrou}
F.~F{\'e}votte and B.~Lathuili{\`e}re.
\newblock {VERROU}: Assessing floating-point accuracy without recompiling.
\newblock Technical report, EDF R\&D, 2016.
\newblock HAL: hal-01383417.

\bibitem[Forsythe(1959)]{forsythe1959rounding}
G.~E. Forsythe.
\newblock Reprint of a note on rounding-off errors.
\newblock \emph{SIAM Review}, 1\penalty0 (1):\penalty0 66--67, 1959.
\newblock \doi{10.1137/1001011}.

\bibitem[Gong et~al.(2024)Gong, Liu, Wang, Cai, Zhao, and Yan]{gong2024what}
Z.~Gong, J.~Liu, J.~Wang, X.~Cai, D.~Zhao, and R.~Yan.
\newblock What makes quantization for large language model hard? an empirical
  study from the lens of perturbation.
\newblock \emph{Proceedings of the AAAI Conference on Artificial Intelligence},
  38\penalty0 (16):\penalty0 18082--18089, 2024.
\newblock \doi{10.1609/aaai.v38i16.29765}.

\bibitem[Gonz{\'a}lez-Pepe et~al.(2026)Gonz{\'a}lez-Pepe, Akhaddar, Glatard,
  and Chatelain]{gonzalezpepe2026fuzzy}
I.~Gonz{\'a}lez-Pepe, H.~Akhaddar, T.~Glatard, and Y.~Chatelain.
\newblock Fuzzy {PyTorch}: Rapid numerical variability evaluation for deep
  learning models.
\newblock \emph{Transactions on Machine Learning Research}, 2026.
\newblock arXiv:2605.25991.

\bibitem[{Google}(2024)]{highway}
{Google}.
\newblock Highway: Efficient and portable {SIMD}/{vector} {C++} library.
\newblock \url{https://github.com/google/highway}, 2024.

\bibitem[Graillat and Muller(2023)]{graillat2023emulation}
S.~Graillat and J.-M. Muller.
\newblock Emulation of the {FMA} in rounded-to-nearest floating-point
  arithmetic.
\newblock \emph{Journal of Computational and Applied Mathematics},
  433:\penalty0 115304, 2023.
\newblock \doi{10.1016/j.cam.2023.115304}.

\bibitem[Graillat et~al.(2006)Graillat, Langlois, and
  Louvet]{graillat2006improving}
S.~Graillat, P.~Langlois, and N.~Louvet.
\newblock Improving the compensated {Horner} scheme with a fused multiply and
  add.
\newblock In \emph{Proceedings of the 2006 ACM Symposium on Applied Computing},
  pages 1323--1327, 2006.
\newblock \doi{10.1145/1141277.1141585}.

\bibitem[{Graphcore}(2020)]{graphcore2020aifloat}
{Graphcore}.
\newblock {AI-Float}: Mixed-precision arithmetic for {AI}, a hardware
  perspective.
\newblock Technical white paper, \url{https://docs.graphcore.ai}, 2020.

\bibitem[Gupta et~al.(2015)Gupta, Agrawal, Gopalakrishnan, and
  Narayanan]{gupta2015deep}
S.~Gupta, A.~Agrawal, K.~Gopalakrishnan, and P.~Narayanan.
\newblock Deep learning with limited numerical precision.
\newblock In \emph{Proceedings of the 32nd International Conference on Machine
  Learning}, volume~37 of \emph{Proceedings of Machine Learning Research},
  pages 1737--1746. PMLR, 2015.

\bibitem[Hendrycks and Gimpel(2016)]{hendrycks2016gelu}
D.~Hendrycks and K.~Gimpel.
\newblock Gaussian error linear units ({GELUs}).
\newblock \emph{arXiv preprint arXiv:1606.08415}, 2016.

\bibitem[Higham(2002)]{higham2002accuracy}
N.~J. Higham.
\newblock \emph{Accuracy and Stability of Numerical Algorithms}.
\newblock SIAM, Philadelphia, PA, 2nd edition, 2002.
\newblock \doi{10.1137/1.9780898718027}.

\bibitem[Higham and Mary(2019)]{higham2019new}
N.~J. Higham and T.~Mary.
\newblock A new approach to probabilistic rounding error analysis.
\newblock \emph{SIAM Journal on Scientific Computing}, 41\penalty0
  (5):\penalty0 A2815--A2835, 2019.
\newblock \doi{10.1137/18M1226312}.

\bibitem[{Hugging Face}(2020)]{distilgpt2modelcard}
{Hugging Face}.
\newblock {DistilGPT2} model card.
\newblock \url{https://huggingface.co/distilbert/distilgpt2}, 2020.

\bibitem[{IEEE}(2019)]{ieee754-2019}
{IEEE}.
\newblock {IEEE} standard for floating-point arithmetic.
\newblock IEEE Std 754-2019, 2019.

\bibitem[Ipsen and Zhou(2020)]{ipsen2020probabilistic}
I.~C.~F. Ipsen and H.~Zhou.
\newblock Probabilistic error analysis for inner products.
\newblock \emph{SIAM Journal on Matrix Analysis and Applications}, 41\penalty0
  (4):\penalty0 1726--1741, 2020.
\newblock \doi{10.1137/19M1270434}.

\bibitem[Kim et~al.(2024)Kim, Hooper, Gholami, Dong, Li, Shen, Mahoney, and
  Keutzer]{kim2024squeezellm}
S.~Kim, C.~Hooper, A.~Gholami, Z.~Dong, X.~Li, S.~Shen, M.~W. Mahoney, and
  K.~Keutzer.
\newblock {SqueezeLLM}: Dense-and-sparse quantization.
\newblock In \emph{Proceedings of the 41st International Conference on Machine
  Learning}, volume 235 of \emph{Proceedings of Machine Learning Research},
  pages 23901--23923. PMLR, 2024.

\bibitem[Lin et~al.(2024)Lin, Tang, Tang, Yang, Chen, Wang, Xiao, Dang, Gan,
  and Han]{lin2024awq}
J.~Lin, J.~Tang, H.~Tang, S.~Yang, W.-M. Chen, W.-C. Wang, G.~Xiao, X.~Dang,
  C.~Gan, and S.~Han.
\newblock {AWQ}: Activation-aware weight quantization for on-device {LLM}
  compression and acceleration.
\newblock In \emph{Proceedings of Machine Learning and Systems}, volume~6,
  pages 87--100, 2024.

\bibitem[Martens(2020)]{martens2020new}
J.~Martens.
\newblock New insights and perspectives on the natural gradient method.
\newblock \emph{Journal of Machine Learning Research}, 21\penalty0
  (146):\penalty0 1--76, 2020.
\newblock URL \url{https://www.jmlr.org/papers/v21/17-678.html}.

\bibitem[Merity et~al.(2017)Merity, Xiong, Bradbury, and
  Socher]{merity2017pointer}
S.~Merity, C.~Xiong, J.~Bradbury, and R.~Socher.
\newblock Pointer sentinel mixture models.
\newblock In \emph{International Conference on Learning Representations}, 2017.

\bibitem[Muller et~al.(2018)Muller, Brunie, de~Dinechin, Jeannerod, Joldes,
  Lef{\`e}vre, Melquiond, Revol, and Torres]{muller2018handbook}
J.-M. Muller, N.~Brunie, F.~de~Dinechin, C.-P. Jeannerod, M.~Joldes,
  V.~Lef{\`e}vre, G.~Melquiond, N.~Revol, and S.~Torres.
\newblock \emph{Handbook of Floating-Point Arithmetic}.
\newblock Birkh{\"a}user, 2nd edition, 2018.
\newblock \doi{10.1007/978-3-319-76526-6}.

\bibitem[Nagel et~al.(2020)Nagel, Amjad, van Baalen, Louizos, and
  Blankevoort]{nagel2020adaround}
M.~Nagel, R.~A. Amjad, M.~van Baalen, C.~Louizos, and T.~Blankevoort.
\newblock Up or down? {A}daptive rounding for post-training quantization.
\newblock In \emph{Proceedings of the 37th International Conference on Machine
  Learning}, volume 119 of \emph{Proceedings of Machine Learning Research},
  pages 7197--7206. PMLR, 2020.

\bibitem[{NVIDIA}(2025)]{nvidia2025ptx}
{NVIDIA}.
\newblock Parallel thread execution {ISA}: The {\texttt{cvt.rs}}
  stochastic-rounding instruction.
\newblock CUDA Documentation, \url{https://docs.nvidia.com/cuda}, 2025.

\bibitem[Parker(1997)]{parker1997monte}
D.~S. Parker.
\newblock Monte carlo arithmetic: exploiting randomness in floating-point
  arithmetic.
\newblock Technical Report CSD-970002, University of California, Los Angeles,
  1997.

\bibitem[Radford et~al.(2019)Radford, Wu, Child, Luan, Amodei, and
  Sutskever]{radford2019language}
A.~Radford, J.~Wu, R.~Child, D.~Luan, D.~Amodei, and I.~Sutskever.
\newblock Language models are unsupervised multitask learners.
\newblock Technical report, OpenAI, 2019.

\bibitem[Sterbenz(1974)]{sterbenz1974floating}
P.~H. Sterbenz.
\newblock \emph{Floating-Point Computation}.
\newblock Prentice-Hall, Englewood Cliffs, NJ, 1974.

\bibitem[Vaswani et~al.(2017)Vaswani, Shazeer, Parmar, Uszkoreit, Jones, Gomez,
  Kaiser, and Polosukhin]{vaswani2017attention}
A.~Vaswani, N.~Shazeer, N.~Parmar, J.~Uszkoreit, L.~Jones, A.~N. Gomez,
  L.~Kaiser, and I.~Polosukhin.
\newblock Attention is all you need.
\newblock In \emph{Advances in Neural Information Processing Systems},
  volume~30, pages 5998--6008, 2017.

\bibitem[Wolf et~al.(2020)Wolf, Debut, Sanh, Chaumond, Delangue, Moi, Cistac,
  Rault, Louf, Funtowicz, et~al.]{wolf2020transformers}
T.~Wolf, L.~Debut, V.~Sanh, J.~Chaumond, C.~Delangue, A.~Moi, P.~Cistac,
  T.~Rault, R.~Louf, M.~Funtowicz, et~al.
\newblock Transformers: State-of-the-art natural language processing.
\newblock In \emph{Proceedings of the 2020 Conference on Empirical Methods in
  Natural Language Processing: System Demonstrations}, pages 38--45, 2020.
\newblock \doi{10.18653/v1/2020.emnlp-demos.6}.

\bibitem[Xiao et~al.(2023)Xiao, Lin, Seznec, Wu, Demouth, and
  Han]{xiao2023smoothquant}
G.~Xiao, J.~Lin, M.~Seznec, H.~Wu, J.~Demouth, and S.~Han.
\newblock {SmoothQuant}: Accurate and efficient post-training quantization for
  large language models.
\newblock In \emph{Proceedings of the 40th International Conference on Machine
  Learning}, volume 202 of \emph{Proceedings of Machine Learning Research},
  pages 38087--38099. PMLR, 2023.

\bibitem[Xiong et~al.(2020)Xiong, Yang, He, Zheng, Zheng, Xing, Zhang, Lan,
  Wang, and Liu]{xiong2020layer}
R.~Xiong, Y.~Yang, D.~He, K.~Zheng, S.~Zheng, C.~Xing, H.~Zhang, Y.~Lan,
  L.~Wang, and T.-Y. Liu.
\newblock On layer normalization in the transformer architecture.
\newblock In \emph{Proceedings of the 37th International Conference on Machine
  Learning}, volume 119 of \emph{Proceedings of Machine Learning Research},
  pages 10524--10533. PMLR, 2020.

\bibitem[Zhang et~al.(2019)Zhang, Lin, Yang, and De~Sa]{zhang2019qpytorch}
T.~Zhang, Z.~Lin, G.~Yang, and C.~De~Sa.
\newblock {QPyTorch}: A low-precision arithmetic simulation framework.
\newblock In \emph{2019 Fifth Workshop on Energy Efficient Machine Learning and
  Cognitive Computing - NeurIPS Edition (EMC2-NIPS)}, pages 10--13, 2019.
\newblock \doi{10.1109/EMC2-NIPS53020.2019.00010}.

\end{thebibliography}
